\newcommand{\Sp}[1]{\left(#1\right)}
\newcommand{\Mp}[1]{\left[#1\right]}
\newcommand{\Bp}[1]{\left\{#1\right\}}
\newcommand{\abs}[1]{\left|#1\right|}
\newcommand{\Norm}[1]{\left\|#1\right\|}
\newcommand{\ve}[1]{\mathbf{#1}}
\newcommand{\E}{\mathbb{E}}
\newcommand{\Prob}{\mathbb{P}}
\newcommand{\R}{\mathbb{R}}
\newcommand{\M}{\mathcal{M}}
\newcommand{\aX}{\abs{\mathcal{X}}}
\newcommand{\aA}{\abs{\mathcal{A}}}
\newcommand{\hatp}{\widehat{\mathcal{P}}}
\newcommand{\nls}{n^l\Sp{h, x, a}}
\DeclareMathOperator*{\argmax}{argmax}
\newtheorem{theorem}{Theorem}
\newtheorem{corollary}{Corollary}
\newtheorem{assumption}{Assumption}
\newtheorem{lemma}{Lemma}
\newtheorem*{lemma*}{Lemma}
\theoremstyle{definition}
\newtheorem{definition}{Definition}
\newcounter{auxFootnote}
\title{Parameterized Indexed Value Function for Efficient Exploration \\ in Reinforcement Learning}
\author{
Tian Tan,\textsuperscript{\rm 1}\thanks{Equal contribution.}\setcounter{auxFootnote}{\value{footnote}}\thanks{Corresponding author.} 
Zhihan Xiong,\textsuperscript{\rm 2}\footnotemark[\value{auxFootnote}] 
Vikranth R. Dwaracherla\textsuperscript{\rm 3}\\ % All authors must be in the same font size and format. Use \Large and \textbf to achieve this result when breaking a line
\textsuperscript{\rm 1}Department of Civil and Environmental Engineering, Stanford University\\
\textsuperscript{\rm 2}Department of Statistics, Stanford University\\
\textsuperscript{\rm 3}Department of Electrical Engineering, Stanford University\\
%If you have multiple authors and multiple affiliations
% use superscripts in text and roman font to identify them. For example, Sunil Issar,\textsuperscript{\rm 2} J. Scott Penberthy\textsuperscript{\rm 3} George Ferguson,\textsuperscript{\rm 4} Hans Guesgen\textsuperscript{\rm 5}. Note that the comma should be placed BEFORE the superscript for optimum readability
%2275 East Bayshore Road, Suite 160\\
%Palo Alto, California 94303\\
%publications20@aaai.org % email address must be in roman text type, not monospace or sans serif
\{tiantan, zxiong9, vikranth\}@stanford.edu 
}
\begin{document}

\maketitle

\begin{abstract}
It is well known that quantifying uncertainty in the action-value estimates is crucial for efficient exploration in reinforcement learning. Ensemble sampling offers a relatively computationally tractable way of doing this using randomized value functions. However, it still requires a huge amount of computational resources for complex problems. In this paper, we present an alternative, computationally efficient way to induce exploration using \emph{index sampling}. We use an indexed value function to represent uncertainty in our action-value estimates. We first present an algorithm to learn parameterized indexed value function through a distributional version of temporal difference in a tabular setting and prove its regret bound. Then, in a computational point of view, we propose a dual-network architecture, \emph{Parameterized Indexed Networks (PINs)}, comprising one mean network and one uncertainty network to learn the indexed value function. Finally, we show the efficacy of PINs through computational experiments.

%\textbf{Keywords:} reinforcement learning, efficient exploration, approximate posterior sampling over networks.

\end{abstract}

\section{Introduction}
Efficient exploration is a long-established problem and an active research area in reinforcement learning (RL). It is well known in the RL community that maintaining an uncertainty estimate in the value functions is crucial for efficient exploration \cite{russo2018tutorial}, \cite{ghavamzadeh2015bayesian}, \cite{osband2018randomized}. Conventionally, dithering methods, such as $\epsilon$-greedy and Boltzmann $\epsilon$-greedy, induce exploration by randomly selecting and experimenting with actions. This is one of the common exploration schemes used in many applications \cite{mnih2015human}, \cite{he2015deep}, \cite{tan2019cooperative}, for its simplicity. However, these schemes do not take uncertainty into account in their value estimates. As a result, they require a huge amount of data to learn a desirable policy through dithering in an environment with sparse and delayed reward signals.

Although there has been a growing interest in quantifying uncertainty, such as dropout \cite{srivastava2014dropout}, \cite{gal2016dropout} and variational inference \cite{fortunato2017noisy}, \cite{touati2018randomized}, these methods are typically not suitable for sequential decision making \cite{osband2018randomized}. Instead, ensemble sampling \cite{osband2016deep}, \cite{lu2017ensemble}, \cite{osband2018randomized}, which represents uncertainty by learning an approximate posterior distribution over value functions via bootstrapping, has achieved remarkable success in RL. Each member in the ensemble, typically a neural network, learns a mapping from the input state or feature space to action values from a perturbed version of the observed data. However, ensemble sampling methods demand learning and maintaining many neural networks in parallel, which can be both computation and memory intensive.

Index sampling \cite{posteriorsampling2019} offers a possibility to distill the ensemble process into a ``single'' network which learns a mapping from the state or feature space \emph{and} a random index space to the action values. The random index $z$, can be thought as an independent random variable drawn from a fixed distribution $p_z$. After learning this mapping, the resulting model can be approximately sampled from the posterior over networks conditioned on the observed data by passing different indices. Therefore, one effectively samples a value function from its posterior by sampling a random index $z$.

In this paper, we consider index sampling via a class of Gaussian indexed value functions as we explicitly \emph{parameterize} the action-value function as $Q_z (x,a) = \nu(x, a) + m(x,a)z$, where $(x, a)$ represents a state-action pair, $z \sim \mathcal{N}(0, 1)$, $v(x,a)$ and $m(x,a)$ are real valued functions representing mean and standard deviation of the action-values or commonly known as $Q$-values. We present an algorithm to learn such a \emph{parameterized indexed value function} via a distributional variation of temporal difference (TD) learning, which we refer to as the \emph{distributional TD} or \emph{Wasserstein TD} when Wasserstein distance is used as the distance metric. We prove that this algorithm enjoys a Bayesian regret bound of $\widetilde{O}\Sp{H^2\sqrt{\aX\aA L}}$ in finite-horizon episodic MDPs, where $H$ is the episode length, $L$ is the number of episodes, and $\aX$ and $\aA$ are the cardinality of the state and action spaces. Then, we propose a dual-network architecture \emph{Parameterized Indexed Networks} (PINs) to  generalize with complex models in deep reinforcement learning. PINs consist of a \emph{mean network} for learning $\nu(x,a)$, and an \emph{uncertainty network} for $m(x,a)$. We demonstrate the efficacy of PINs on two benchmark problems, Deep-sea and Cartpole Swing-up, that require deep exploration from Deepmind \textit{bsuite} \cite{osband2019bsuite}. Our open-source implementation of PINs can be found at https://github.com/tiantan522/PINs.

\section{Related Work}
In the model-based setting, the optimal exploration strategy can be obtained through dynamic programming in Bayesian belief space given a prior distribution over MDPs \cite{guez2012efficient}. However, the exact solution is intractable. An alternative way for efficient exploration is via \emph{posterior sampling}. Motivated by \emph{Thompson Sampling}, the posterior sampling reinforcement learning (PSRL) was first introduced in \cite{strens2000bayesian} mainly as a heuristic method. The theoretical aspects of PSRL were poorly understood until very recently. Osband et al. \cite{osband2017posterior} established an $\widetilde{O}\Sp{H^{1.5}\sqrt{\aX\aA L}}$ Bayesian regret bound for PSRL in finite-horizon episodic MDPs. Note that this result improves upon the best previous Bayesian regret bound of $\widetilde{O}\Sp{H^{1.5}\aX\sqrt{\aA L}}$ for \emph{any} reinforcement learning algorithm. Although PSRL enjoys state-of-the-art theoretical guarantees, it is intractable for large practical systems. Approximations are called for when exact solution is intractable. 

The RLSVI algorithm in \cite{osband2017deep} approximates the posterior sampling for exploration by using randomized value functions sampled from a posterior distribution. However, its performance is highly dependent on choosing a reasonable linear representation of the value function for a given problem. To concurrently perform generalization and efficient exploration with a flexible nonlinear function approximator, \cite{osband2016deep} proposed a bootstrapped Deep Q-Network (BootDQN) which learns and maintains an ensemble of neural networks in parallel, each trained by a perturbed version of the observed data. The issue with bootstrapping is that it possesses no mechanism for estimating uncertainty that does not come from the observed dataset. This restrains BootDQN from performing well in environments with delayed rewards. A simple remedy was proposed recently in \cite{osband2018randomized} where each of the ensemble members is trained and regularized to a prior network/function that is fixed after random initialization. Equivalently, this amounts to \emph{adding} a randomized untrainable prior function to each ensemble member in linear representation. This additive prior mechanism can also be viewed as some form of self-motivation or curiosity to direct learning and exploration if the agent has never observed a reward. 

Index sampling presented in \cite{posteriorsampling2019} in a bandit setting aims to learn a posterior distribution over value functions with an extra random index as input. In this paper, we extend this to RL setting using distributional TD approach. We first present an algorithm for learning a parameterized indexed value function and the corresponding Bayesian regret analysis. The formulation inspired us to design PINs which combines reinforcement learning with deep learning.

\section{Analysis of Parameterized Indexed Value Function} 
We first consider learning a parameterized indexed value function in a tabular setting. The environment is modelled as an episodic Markov Decision Process (MDP).         

\subsection{Markov Decision Process Setup}

Our MDP formulation is adopted from \cite{osband2017deep}, which is stated as the following assumption.
\begin{assumption}
    \label{assume:mdp}
    The MDP $\mathcal{M}=\Sp{\mathcal{S}, \mathcal{A}, H, R, \Prob, \rho}$ is finite-horizon time-inhomogeneous such that the state space can be factorized as $\mathcal{S}=\mathcal{S}_0\cup\mathcal{S}_1\cup\dots\cup\mathcal{S}_{H-1}$ and each $s_h\in \mathcal{S}_h$ can be written as a pair $s_h=\Sp{h, x}, x\in\mathcal{X}$ for each time step $h\in\Bp{0, 1, \dots, H-1}$ with $\abs{\mathcal{X}}<\infty$. Further, we have $\abs{\mathcal{A}}<\infty$ and $\Prob\Sp{s_{h+1}\in\mathcal{S}_{h+1}\mid s_h\in\mathcal{S}_h, a_h}=1$ for any $a_h\in\mathcal{A}$, $h<H-1$ and the MDP will terminate with probability $1$ after taking action $a_{H-1}$. Finally, the reward is always binary, which means that $R\Sp{s, a}\in\Bp{0, 1}$ for any $s\in\mathcal{S}, a \in \mathcal{A}$.
\end{assumption}

Then, for each state-action pair $\Sp{h, x, a}$, the transition leads to an observation of $o=(r, x')$ pair consisting of the next $x' \in \mathcal{X}$ and a reward $r\in\Bp{0, 1}$ after taking action $a$. We denote this pair $o$ as an \emph{outcome} of the transition. Let $\mathcal{P}_{h, x, a}$ be a categorical distribution after taking an action $a$ in state $(h, x)$ over the $2\abs{\mathcal{X}}$ possible outcomes, which are finite because of the binary reward. We make the following assumption in our analysis:  

\begin{assumption}
	\label{assume:dir}
	With the above setup, for each $(h, x, a) \in \{0, 1, \dots, H - 2\} \times \mathcal{X} \times \mathcal{A}$, the outcome distribution is drawn from a Dirichilet prior $\mathcal{P}_{h, x, a}\sim\mathrm{Dirichlet}\Sp{\bm{\alpha}^0_{h, x, a}}$ for $\bm{\alpha}^0_{h, x, a} \in \R_{+}^{2 \abs{\mathcal{X}}}$, and each $\mathcal{P}_{h, x, a}$ is drawn independently. Further, assume that there exists some $\beta\geq 3$ such that $\bm{1}^T\bm{\alpha}^0_{h, x, a}=\beta$ for all $\Sp{h, x, a}$. 
\end{assumption}
 
The Dirichilet distribution is chosen because it is a conjugate prior of categorical distribution. This helps in simplifying our analysis on Bayesian regret bound. Nevertheless, one may extend our analysis to the reward which has a bounded support in $\Mp{0, 1}$ by using techniques from \cite{agrawal2012analysis}.   

We define a \emph{policy} to be a mapping from $\mathcal{S}$ to a probability distribution over $\mathcal{A}$, and denote the set of all policies by $\Pi$. For any MDP $\M$ and policy $\pi \in \Pi$, we can define a value function $V^{\pi}_{\M, h}\Sp{x} = \E_{\M, \pi}[\sum_{t=h}^{H-1} r_{t+1} | x_h = x]$ as the expected sum of rewards starting from $x$ at time step $h$ and following policy $\pi$. Then, the optimal value function can be defined as $V^*_{\M, h}\Sp{x}=\max_{\pi}V^{\pi}_{\M, h}\Sp{x}$, and the optimal state-action value function at time $h$ can be defined accordingly:
\fontsize{10}{10}
\begin{equation}
\label{equ:defQ}
\begin{split}
    &Q^*_{\M, h}\Sp{x, a}=\\
    &\E_{\M}\Mp{r_{h+1}+V^*_{\M, h+1}\Sp{x_{h+1}}\mid x_h=x, a_h=a}
\end{split}
\end{equation}
\normalsize

\subsection{Algorithm for Tabular RL}
In this section, we present our algorithm for learning a parameterized indexed value function in the above tabular setting, which is summarized as Algorithm \ref{algo:wtd}.

Here we explicitly parameterize the state-action value function in episode $l$ as $Q^l_{Z, h}\Sp{x, a}=\nu^l\Sp{h, x, a}+m^l\Sp{h, x, a}Z_{h, x, a}$, where $Z_{h, x, a}\sim\mathcal{N}\Sp{0, 1}$ is assumed to be independent for each state-action pair $\Sp{h, x, a}$ for the ease of analysis. To simplify notation, when $Z$ appears as a subscript of $Q$ function, it implicitly depends on the state-action input of $Q$. With a slight abuse of notation, we sometimes use a lowercase subscript $Q^l_{z,h}\Sp{x, a} = \nu^l(h, x, a) + m^l(h, x, a) z_{h, x, a}$ as a sampled value function, where $z_{h, x, a}$ is a sample from a standard Gaussian. Further, let $\overline{Q}\sim\mathcal{N}\Sp{\bar{\theta}, \sigma_0^2}$ be a prior of the $Q$ function for all $(h, x, a)$. Denote $\mathcal{D}_{h, x, a}^{l-1}=\Bp{(r_{h+1}^j, x_{h+1}^j) \mid x_h^j = x, a_h^j = a, j<l}$, which includes all observed transition outcomes for $(h, x, a)$ up to the start of episode $l$, and $n^l\Sp{h, x, a}=\abs{\mathcal{D}^{l-1}_{h, x, a}}$.

In the spirit of temporal difference learning, $\forall (r, x') \in \mathcal{D}_{h, x, a}^{l-1} \text{ and } n^l\Sp{h, x, a} \neq 0$, we define a learning target for our parameterized indexed value function $Q^l_{Z, h}\Sp{x, a}$ as:
\fontsize{9}{9}
\begin{align*}
y^l(h, x, a, r, x') = r+\frac{\sigma Z}{\sqrt{n^l\Sp{h, x, a}}}+\max_{a'\in\mathcal{A}}Q^l_{\tilde{z}, h+1}\Sp{x', a'}
\end{align*}
\normalsize
where $\tilde{z}_{h+1, x', a'}\sim\mathcal{N}\Sp{0, 1}$ is independently sampled for each $(h+1, x', a')$, $\sigma$ is a positive constant added to perturb the observed reward, and $Z\sim\mathcal{N}\Sp{0, 1}$ is a standard normal random variable. This target is similar to the one used in RLSVI \cite{osband2017deep} except that $Z$ is a random variable and the added noise $\sigma$ is divided by $\sqrt{n^l\Sp{h, x, a}}$. This decay of noise $\sigma$ is needed to show concentration in the distribution of value function as we gather more and more data.

The learning target resembles traditional TD in the sense that it can be viewed as one-step look-ahead. However, a random variable $Z$ is needed in the target for indexing. Specifically, any realization of $Z$ reduces it to a TD error between a current estimate at the sampled index and a perturbed scalar target. To take all values of $Z$ into account, we propose to match their distributions directly, resulting in a distributional version of TD learning. We refer it as \emph{distributional temporal difference}, or more specifically \emph{Wasserstein temporal difference} when $p$-Wasserstein distance is used as a metric. In the tabular setting, this leads to the following loss function:
\fontsize{10}{10}
\begin{equation}
\label{equ:tabular_loss}
    \begin{split}
    &\mathcal{L}_p\Sp{\nu^l\Sp{h, x, a}, m^l\Sp{h, x, a}}=\\
    &\sum_{\Sp{r, x'}\in\mathcal{D}^{l-1}_{h, x, a}} W_p\Sp{Q^l_{Z, h}\Sp{x, a},\ y^l\Sp{h, x, a, r, x'}}^2,
    \end{split}
\end{equation}
\normalsize
where $Q^l_{Z, h}\Sp{x, a}=\nu^l\Sp{h, x, a}+m^l\Sp{h, x, a}Z_{h, x, a}$. Here, $W_p(\cdot, \cdot)$ is the $p$-Wasserstein distance between two distributions. Similarly, we can define a regularization to the prior as  
$\psi_p \Sp{\nu, m}=\beta W_p\Sp{\nu+mZ,\ \overline{Q}}^2$, where $\beta=\frac{\sigma^2}{\sigma_0^2}$. Therefore, when updating the parameters $\Sp{\nu, m}$, one need to solve $\min_{\Sp{\nu, m}}\Bp{\mathcal{L}_p+\psi_p}$.

For two Gaussian random variables $X\sim\mathcal{N}\Sp{\mu_X, \sigma_X^2}$ and $Y\sim\mathcal{N}\Sp{\mu_Y, \sigma_Y^2}$ the 2-Wasserstein distance between them is simply $W_2\Sp{X, Y}^2=\Sp{\mu_X-\mu_Y}^2+\Sp{\sigma_X-\sigma_Y}^2$. Therefore, we can minimize $\mathcal{L}_2 + \psi_2$ exactly by
\begin{equation}
\label{equ:nu_update}
\begin{split}
&\nu^l\Sp{h, x, a}=\\
&\frac{\sum_{\Sp{r, x'}\in\mathcal{D}^{l-1}_{h, x, a}}\Sp{r+\max_{a'\in\mathcal{A}}Q^l_{\tilde{z}, h+1}\Sp{x', a'}}+\beta\bar{\theta}}{n^l\Sp{h, x, a}+\beta}
\end{split}
\end{equation}

\begin{equation}
\label{equ:m_update}
m^l\Sp{h, x, a}=\frac{\sqrt{n^l\Sp{h, x, a}}\sigma+\beta\sigma_0}{n^l\Sp{h, x, a}+\beta}
\end{equation}

By combining these steps, we get Algorithm \ref{algo:wtd} for tabular RL. As a sanity check, we see that the standard deviation term $m^l$ will shrink and decrease to zero as $n^l \rightarrow \infty$, which is achieved by decaying $\sigma$ by $\sqrt{n^l}$ in the learning target.
\begin{algorithm}
    \caption{Tabular Wasserstein TD (WTD)}
    \label{algo:wtd}
    \SetAlgoLined
    For all $l\in\Bp{0, 1, \dots, L}$, $h\in\Bp{0, \dots, H-1}$, $x\in\mathcal{X}$ and $a\in\mathcal{A}$, initialize $\mathcal{D}^l_{h, x, a}=\emptyset$ and $\nu^l\Sp{H, x, a}=m^l\Sp{H, x, a}=0$\\
    \For{$l=1, \dots, L$}{
        Sample $\tilde{z}_{h, x, a}\sim\mathcal{N}\Sp{0, 1}$ for all $x\in\mathcal{X}$, $a\in\mathcal{A}$ and $h\in\Bp{0, \dots, H-1}$\\
        \For{$h=H-1, \dots, 0$}{
            \For{$x\in\mathcal{X}$, $a\in\mathcal{A}$}{
                Update $\nu^l\Sp{h, x, a}$ and $m^l\Sp{h, x, a}$ using equations (\ref{equ:nu_update}) and (\ref{equ:m_update})
            }
        }
        Observe $x_0$\\
        $\mathcal{D}_{h, x, a}^{l}\leftarrow\mathcal{D}_{h, x, a}^{l-1}$ for all $x\in\mathcal{X}$, $a\in\mathcal{A}$ and $h\in\Bp{0, \dots, H-1}$\\
        \For{$h=0, \dots, H-1$}{
            Take action $a_h=\argmax_{a\in\mathcal{A}}Q^l_{\tilde{z}, h}\Sp{x_h, a}$ and observe $\Sp{r_{h+1}, x_{h+1}}$\\
            $\mathcal{D}_{h, x_h, a_h}^{l}\leftarrow\mathcal{D}_{h, x_h, a_h}^{l}\cup\Bp{\Sp{r_{h+1}, x_{h+1}}}$
        }
    }
\end{algorithm}

\subsection{Bayesian Regret Bound}
The following theorem is the main result of our analysis which establishes a Bayesian regret bound for Algorithm \ref{algo:wtd}.                                        
\begin{theorem}
	\label{theo:regret}
	Consider an agent $\mathsf{WTD}$ with infinite buffer, greedy actions and an MDP stated in Assumption \ref{assume:mdp} with planning horizon $H$. Under Assumption \ref{assume:dir} with $\beta\geq 3$, if Algorithm \ref{algo:wtd} is applied with $\sigma^2=3H^2$, $\bar{\theta}=H$ and $\frac{\sigma^2}{\sigma_0^2}=\beta$, for any number of episodes $L\in\mathbb{N}$, we have
	\fontsize{9}{9}
	\begin{align*}
	&\mathrm{BayesRegret}\Sp{\mathsf{WTD}, L}\\
	&\leq 5H^2\sqrt{\beta\abs{\mathcal{X}}\abs{\mathcal{A}}L\log_+\Sp{2\abs{\mathcal{X}}\abs{\mathcal{A}}HL}}\log_+\Sp{1+\frac{L}{\abs{\mathcal{X}}\abs{\mathcal{A}}}}\\
	&=\widetilde{O}\Sp{H^2\sqrt{\abs{\mathcal{X}}\abs{\mathcal{A}}L}}
	\end{align*}
	\normalsize
	where $\widetilde{O}\Sp{\cdot}$ ignores all poly-logarithmic terms and $\log_+\Sp{x}=\max\Bp{1, \log\Sp{x}}$.
\end{theorem}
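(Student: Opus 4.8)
The plan is to follow the Bayesian regret framework for randomized value functions developed for RLSVI, adapting it to our Gaussian-indexed parameterization and the distributional TD update. Write the Bayesian regret as $\mathrm{BayesRegret}\Sp{\mathsf{WTD}, L} = \E\Mp{\sum_{l=1}^L \Sp{V^*_{\M, 0}\Sp{x_0^l} - V^{\pi^l}_{\M, 0}\Sp{x_0^l}}}$, where the expectation is over the Dirichlet prior on $\M$, the algorithm's internal Gaussian sampling, and the observed trajectories. For each episode $l$, let $\tilde V^l_h$ denote the value function induced by the sampled indices $\tilde z$ that select $\pi^l$ in Algorithm \ref{algo:wtd}. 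Adding and subtracting $\tilde V^l_0\Sp{x_0^l}$ splits the per-episode regret into an \emph{optimism gap} $V^*_{\M, 0}\Sp{x_0^l} - \tilde V^l_0\Sp{x_0^l}$ and an \emph{estimation error} $\tilde V^l_0\Sp{x_0^l} - V^{\pi^l}_{\M, 0}\Sp{x_0^l}$.

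First I would establish \emph{stochastic optimism}: conditioned on the history up to episode $l$, the sampled value $\tilde V^l_0\Sp{x_0^l}$ stochastically dominates the true optimal value $V^*_{\M, 0}\Sp{x_0^l}$, so that the expected optimism gap is nonpositive. For Gaussians this reduces to checking that the posterior mean $\nu^l$ and the injected standard deviation $m^l$ from equations (\ref{equ:nu_update}) and (\ref{equ:m_update}) are large enough to dominate the true posterior of $Q^*$. The calibration $\sigma^2 = 3H^2$ and $\bar\theta = H$ is chosen precisely so that the Gaussian width covers the posterior variance of the one-step target $r + \max_{a'\in\mathcal{A}} Q^*_{\M, h+1}\Sp{x', a'}$ under the Dirichlet--categorical conjugacy of Assumption \ref{assume:dir}; I would verify this by a backward induction over $h = H, H-1, \dots, 0$, bounding the per-step posterior variance by $H^2$ and using a Gaussian comparison to propagate optimism through the $\max$ operator.

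Next I would bound the estimation error by telescoping the Bellman equation: since $\pi^l$ is greedy with respect to $Q^l_{\tilde z}$, I would write $\tilde V^l_0\Sp{x_0^l} - V^{\pi^l}_{\M, 0}\Sp{x_0^l}$ as a sum over $h$ of on-policy Bellman residuals at the visited pairs $\Sp{h, x_h^l, a_h^l}$. Each residual is controlled by the concentration of the posterior mean $\nu^l$ around the true expected target together with the sampled Gaussian deviation $m^l \tilde z$; a union bound over all $\Sp{h, x, a}$ and episodes bounds each residual by $O\Sp{m^l\Sp{h, x_h^l, a_h^l}\sqrt{\log_+\Sp{2\aX\aA H L}}}$, and equation (\ref{equ:m_update}) gives $m^l\Sp{h, x, a} = O\Sp{\sigma / \sqrt{\nls}}$, so the scale $\sigma = \sqrt{3}\,H$ contributes one factor of $H$ to every residual.

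Finally I would sum over episodes. The remaining quantity is $\sum_{l=1}^L \sum_{h=0}^{H-1} 1 / \sqrt{n^l\Sp{h, x_h^l, a_h^l}}$ along the realized trajectories; a standard pigeonhole argument together with Cauchy--Schwarz over $\Sp{h, x, a}$ bounds it by $O\Sp{H\sqrt{\aX\aA L}}$ up to the factor $\log_+\Sp{1 + L / \Sp{\aX\aA}}$ in the theorem. Combining the nonpositive optimism gap, the per-residual scale $H$, the horizon-and-pigeonhole factor $H\sqrt{\aX\aA L}$, the $\sqrt{\beta}$ from the prior strength, and the union-bound logarithms yields $5H^2\sqrt{\beta\aX\aA L\log_+\Sp{2\aX\aA H L}}\,\log_+\Sp{1 + L / \Sp{\aX\aA}} = \widetilde O\Sp{H^2\sqrt{\aX\aA L}}$. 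The main obstacle is the stochastic-optimism step: because the Gaussian-indexed estimate is produced by a Wasserstein TD update rather than an exact posterior, I must show that the decaying noise schedule $\sigma / \sqrt{\nls}$ still injects enough variance at every stage to dominate $Q^*$ while vanishing fast enough to keep the summed estimation error finite --- reconciling these two requirements is the crux of the argument.
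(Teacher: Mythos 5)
Your proposal is correct and follows essentially the same route as the paper's proof: both adopt the RLSVI-style Bayesian regret framework, splitting per-episode regret into an optimism gap (made nonpositive in expectation by showing the Gaussian-indexed $Q^l_Z$ is stochastically optimistic for $Q^*_{\M}$ under the Dirichlet posterior, with $\sigma^2=3H^2$, $\bar\theta=H$ calibrated exactly so the injected variance $\sigma^2/(n^l+\beta)$ dominates the span-based requirement $3\,\mathrm{Span}(V)^2/(n^l+\beta)$, propagated backward through the max operator) and a telescoped sum of Bellman residuals $(F_{l,h}-F_{\M,h})Q^l_{Z,h+1}$ at visited pairs, which is then bounded by a prior-bias term plus Gaussian noise of scale $O(\sigma/\sqrt{n^l+\beta})$ and summed via pigeonhole and Cauchy--Schwarz. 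The only pieces your sketch glosses over are bookkeeping the paper handles explicitly: the expected max norm $\E[\max_{l,h}\Norm{V_{Q^l_{Z,h+1}}}_\infty]$ of the \emph{learned} (hence Gaussian-inflated) value functions, which supplies the $\sqrt{\log_+(2\aX\aA HL)}$ factor, and the final case split on $L\gtrless 25\beta H^2\aX\aA$ needed to absorb the lower-order $\beta H^3\aX\aA$ term into the stated constant.
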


\begin{proof}
The complete proof is included in the supplemental material C. Here, we provide a sketch of the proof, whose framework takes the analysis of RLSVI in \cite{osband2017deep} as reference. We first define a stochastic Bellman operator as follows.

\subsubsection{Stochastic Bellman Operator}
Based on the updating rules (\ref{equ:nu_update}) and (\ref{equ:m_update}), for a general function $Q\in\R^{\abs{\mathcal{X}}\abs{\mathcal{A}}}$, Algorithm \ref{algo:wtd} defines a functional operator $F_{l, h}$:
\fontsize{9}{9}
\begin{equation*}
%\label{equ:operator}
    \begin{split}
        &F_{l, h}Q\Sp{x, a}=\frac{\sum_{\Sp{r, x'}\in\mathcal{D}^{l-1}_{h, x, a}}\Sp{r+\max_{a'\in\mathcal{A}}Q\Sp{x', a'}}+\beta\bar{\theta}}{n^l\Sp{h, x, a}+\beta}\\
        &\qquad\qquad\qquad + \frac{\sqrt{n^l\Sp{h, x, a}}\sigma+\beta\sigma_0}{n^l\Sp{h, x, a}+\beta}\cdot Z_{h, x, a}
    \end{split}
\end{equation*}
\normalsize
where $Z_{h, x, a}\sim\mathcal{N}\Sp{0, 1}$ is independent from $Q$. Specifically, with this operator, we have $Q_{Z, h}^l=F_{l, h}Q_{Z', h+1}^l$, $Z, Z'$ are independent.

Recall the definition of $Q^*_{\M, h}$ in equation (\ref{equ:defQ}). We denote $F_{\M, h}$ as the true Bellman operator that satisfies $Q^*_{\M, h}=F_{\M, h}Q^*_{\M, h+1}$.

To prove Theorem \ref{theo:regret}, we resort to the following key lemma proved in \cite{osband2017deep}. 
\begin{lemma}
	\label{lemma:sto_optim}
	Let $\Sp{Q^l_{Z, 0}, \dots, Q^l_{Z, H}}$ be the sequence of state-action value function learned by Algorithm \ref{algo:wtd}, where $Q^l_{Z, H}=\ve{0}$, and $\pi^l$ be the greedy policy based on these $Q$-values. For any episode $l\in\mathbb{N}$, if we have
	\begin{equation}
	\label{equ:optim_1}
	\E\Mp{\max_{a'\in\mathcal{A}}Q^l_{Z, 0}\Sp{x^l_0, a'}}\geq\E\Mp{\max_{a'\in\mathcal{A}}Q^*_{\M, 0}\Sp{x^l_0, a'}}
	\end{equation}
	then, by defining $\Delta_l=V^*_{\M, 0}\Sp{x_0^l}-V^{\pi^l}_{\M, 0}\Sp{x_0^l}$, we can have
    \begin{equation}
    \label{equ:raw_bound_1}
        \begin{split}
            \E\Mp{\Delta_l}\leq\E\Mp{\sum_{h=0}^{H-1}\Sp{\Sp{F_{l, h}-F_{\M, h}}Q^l_{Z, h+1}}\Sp{x_h^l, a_h^l}}
        \end{split}
    \end{equation}
\end{lemma}

In order to use the bound (\ref{equ:raw_bound_1}), it is necessary to verify that the condition (\ref{equ:optim_1}) is satisfied by our Algorithm \ref{algo:wtd}. To achieve this, we resort to the concept of \textit{stochastic optimism}, which is defined as the following:
\subsubsection{Stochastic Optimism}
A random variable $X$ is \textit{stochastically optimistic} with respect to another random variable $Y$, denoted as $X\geq_{SO}Y$, if $\E\Mp{u\Sp{X}}\geq\E\Mp{u\Sp{Y}}$ holds for all convex increasing function $u:\R\mapsto\R$.

If Assumption \ref{assume:dir} holds, by additionally assuming some technical conditions on parameters $\sigma^2$, $\bar{\theta}$ and $\beta$, it is possible to show that
$$Q^l_{Z, 0}\Sp{x, a}\mid\mathcal{H}_{l-1}\geq_{SO}Q^*_{\M, 0}\Sp{x, a}\mid\mathcal{H}_{l-1}$$
for any history $\mathcal{H}_{l-1}=\bigcup_{k=1}^{l-1}\Bp{\Sp{h, x_h^k, a_h^k, r_{h+1}^k}\mid h<H}$ and $\Sp{x,a}\in\mathcal{X}\times\mathcal{A}$.

By using Lemma 2 in \cite{osband2017deep} on preservation of optimism, we can then further show that
\fontsize{9}{9}
$$\E_{\mathcal{H}_{l-1}}\Mp{\max_{a'\in\mathcal{A}}Q^l_{Z, 0}\Sp{x^l_0, a'}}\geq\E_{\mathcal{H}_{l-1}}\Mp{\max_{a'\in\mathcal{A}}Q^*_{\M, 0}\Sp{x^l_0, a'}}$$
\normalsize
Therefore, condition (\ref{equ:optim_1}) can be obtained by simply taking expectation over all possible $\mathcal{H}_{l-1}$.

\subsubsection{Towards Regret Bound}
After verifying condition (\ref{equ:optim_1}), we can then apply bound (\ref{equ:raw_bound_1}) in Lemma \ref{lemma:sto_optim} to our Algorithm \ref{algo:wtd} to get
\begin{align*}
    &\mathrm{BayesRegret}\Sp{\mathsf{WTD}, L}=\E\Mp{\sum_{l=1}^{L}\Delta_l}\\
    &\leq\E\Mp{\sum_{l=1}^{L}\sum_{h=0}^{H-1}\Sp{\Sp{F_{l, h}-F_{\M, h}}Q^l_{Z, h+1}}\Sp{x_h^l, a_h^l}}
\end{align*}

Finally, after some algebraic manipulations, it is possible to bound this term by $\widetilde{O}\Sp{H^2\sqrt{L\aX\aA}}$, which completes the proof.
\end{proof}

\section{Parameterized Indexed Networks}
Previously we have proved the efficiency of Wasserstein TD for learning a parameterized indexed value function in the tabular setting. We next discuss the application of this methodology to deep RL. We note that some of the conditions to derive Theorem \ref{theo:regret} are violated in most cases of deep RL, which puts us in the territory of heuristics. More precisely, technical conditions on parameters $\sigma^2$, $\bar{\theta}$ may not hold, the number of visitations $n^l\Sp{h, x, a}$ cannot be counted in general, and as we will see soon that the independence among sampled $\tilde{z}$'s (in Algorithm \ref{algo:wtd}) required for analysis can be violated owing to algorithmic considerations. Nevertheless, insights from the previous algorithm helped us for our design of a deep learning algorithm that can perform well in practice.

We first recall the essence of index sampling, which aims to effectively sample one value function from posterior by sampling an index $z \sim \mathcal{N}(0, 1)$. To induce temporally consistent exploration or deep exploration \cite{osband2016deep}, the agent needs to follow a greedy policy according to the sampled value function in the next episode of interaction. This insight bears a resemblance to the idea of \emph{Thompson sampling} \cite{thompson1933likelihood} for balancing between exploration and exploitation, and it amounts to sampling one single $z^l$ per episode in index sampling. A general algorithm describing the interaction between an agent with index sampling (IS) and its environment is summarized in Algorithm \ref{algo:live} $\mathsf{live\_IS}$. Note that we use $\mathsf{live\_IS}$ as an \emph{off-policy} algorithm since the agent samples from a replay buffer containing previously observed transitions to obtain its current policy (in \textit{line 3}).    
\begin{algorithm}
    \caption{$\mathsf{live\_IS}$}
    \label{algo:live}
    \SetAlgoLined
    \KwIn{$\mathsf{agent}$, $\mathsf{environment}$}
    \For{$l$ in $\Sp{1, 2, \dots}$}{
        Sample $z^l\sim\mathcal{N}\Sp{0, 1}$\\
		$\mathsf{agent.learn\_from\_buffer()}$\\
		$\mathsf{history\leftarrow environment.reset()}$\\
		\While{$\mathsf{history}$ is not \textbf{terminal}}{
		    $\mathsf{action}\leftarrow\mathsf{agent.act(history)}\Sp{z^l}$\\
		    $\mathsf{history}\leftarrow\mathsf{environment.step(action)}$\\
		}
		$\mathsf{agent.update\_buffer(history)}$
    }
\end{algorithm}

We realize that for any realization of $z^l$ that is fixed within an episode, applying the updating rules for action-value function (\ref{equ:nu_update}) and (\ref{equ:m_update}) in the tabular case then gives
\fontsize{9}{9}
\begin{align*}
    &Q^l_{z^l, h}\Sp{x, a}=\frac{\sum_{\Sp{r, x'}\in\mathcal{D}^{l-1}_{h, x, a}}\Sp{r+Q_{z^l, h+1}^l\Sp{x', \tilde{a}}}+\beta\bar{\theta}}{n^l\Sp{h, x, a}+\beta}\\
    &\qquad+\frac{\sqrt{n^l\Sp{h, x, a}}\sigma+\beta\sigma_0}{n^l\Sp{h, x, a}+\beta}z^l\\
    &=\frac{\sum_{\Sp{r, x'}\in\mathcal{D}^{l-1}_{h, x, a}}\Sp{r+\nu^l\Sp{h+1, x', \tilde{a}}}+\beta\bar{\theta}}{n^l\Sp{h, x, a}+\beta}\\
    &+\frac{\sqrt{n^l\Sp{h, x, a}}\sigma+\beta\sigma_0+\sum_{\Sp{r, x'}\in\mathcal{D}^{l-1}_{h, x, a}}m^l\Sp{h+1, x', \tilde{a}}}{n^l\Sp{h, x, a}+\beta}z^l
\end{align*}
\normalsize
where $\tilde{a}=\argmax_{a'\in\mathcal{A}}Q^l_{z^l, h+1}\Sp{x', a'}$ is the best action with respect to the sampled value function at the next time step. Notice that this update for $Q$ function is equivalent to compute the minimization of the distributional loss (\ref{equ:tabular_loss}) with a \emph{modified} target:
\begin{align*}
     &\Sp{\nu^l\Sp{h, x, a}, m^l\Sp{h, x, a}} = \\
    &\min_{\nu, m} \sum_{\Sp{r, x'}\in\mathcal{D}^{l-1}_{h, x, a}} W_2\Sp{\nu + mZ, \tilde{y}^l}^2 + \psi_2\Sp{\nu, m},
\end{align*}
where
\fontsize{9}{9}
\begin{align*}
& \tilde{y}^l\Sp{h, x, a, r, x'} = r+\frac{\sigma Z}{\sqrt{n^l\Sp{h, x, a}}}+Q^l_{Z, h+1}\Sp{x', \tilde{a}} \\
& = \underbrace{r+ \nu^l(h+1, x', \tilde{a})}_{\text{mean target}} + \underbrace{(\frac{\sigma}{\sqrt{n^l\Sp{h, x, a}}}+ m^l(h+1, x', \tilde{a}))}_{\text{uncertainty target}}Z
\end{align*}
\normalsize
Note that in target $\tilde{y}^l$ the subscript $Z$ is a random variable instead of a realization. We see that the uncertainty measure at the next step $h+1$ is propagated into the uncertainty at the current step $h$. Although derived from a completely different perspective, this is consistent with previous findings in \cite{o2017uncertainty}.

\subsection{Network Design}
According to the above derivation from index sampling, Wasserstein TD for learning a parameterized indexed value function is essentially trying to fit the point estimate $\nu(s, a)$ and the uncertainty estimate $m(s, a)$ to their one-step look-head mean target and uncertainty target respectively as illustrated in $\tilde{y}^l$. When $W_2(\cdot, \cdot)$ is used as metric, this reduces to minimizing two separate squared errors. Hence, we propose to use a dual-network architecture named \emph{Parameterized Indexed Networks (PINs)} for deep RL consisting of a mean network for learning $\nu(s, a)$ and an uncertainty network for learning $m(s,a)$. The two networks are joined by a Gaussian with index $Z\sim\mathcal{N}(0, 1)$.

Let $\theta = (\phi, \omega)$ be parameters of the trainable/online PINs, where $\phi$, $\omega$ are parameters of the mean and the uncertainty network respectively, and let $\tilde{\theta} = (\tilde{\phi}, \tilde{\omega})$ be the parameters of the target nets \cite{mnih2015human}. Then, in $\mathsf{learn\_from\_buffer()}$ of Algorithm \ref{algo:live}, the agent is essentially trying to minimize the following distributional loss:
\fontsize{9}{9}
\begin{align*}
    &\mathcal{L}_{\text{dist}}\Sp{\theta, \tilde{\theta}, \tilde{\mathcal{D}}}= \beta W_2\Sp{Q_{\theta, Z}, \overline{Q}}^2\\
    &+\sum_{\Sp{s, a, r, s'}\in\tilde{\mathcal{D}}}W_2\left(Q_{\theta, Z}\Sp{s, a}, r+\sigma(l)Z+\gamma Q_{\tilde{\theta}, Z}\Sp{s', \bar{a}}\right)^2 
\end{align*}
\normalsize
where $\overline{Q}$ is a prior distribution, $\beta$ is a prior scale which is treated as a hyperparameter, $Z\sim\mathcal{N}\Sp{0, 1}$, $\gamma$ is a discount factor, $\tilde{\mathcal{D}}$ represents sampled minibatch of data consisting of past transitions from a replay buffer, and $\sigma(l)$ is a perturbation added to observed reward $r$. Further, being consistent with our previous analysis, we can consider slowly decaying the added noise variance $\sigma\Sp{l}^2$ after each episode of learning to improve concentration.

For action selection in the learning target, we find empirically that the $\tilde{a}$, obtained by taking argmax on the sampled value function, can sometimes affect learning stability. Therefore, we instead use $\bar{a} \in \argmax_{a'\in\mathcal{A}}\E\Mp{Q_{\tilde{\theta}, Z'}\Sp{s', a'}} = \argmax_{a'\in\mathcal{A}} \nu_{\tilde{\phi}}(s', a')$ for action selection, and the mean network reduces to a Deep Q-Network. We include a performance comparison between $\tilde{a}$ and $\bar{a}$ for action selection in supplemental material B.

\begin{figure}[ht]
    \centering
        \includegraphics[width=\linewidth]{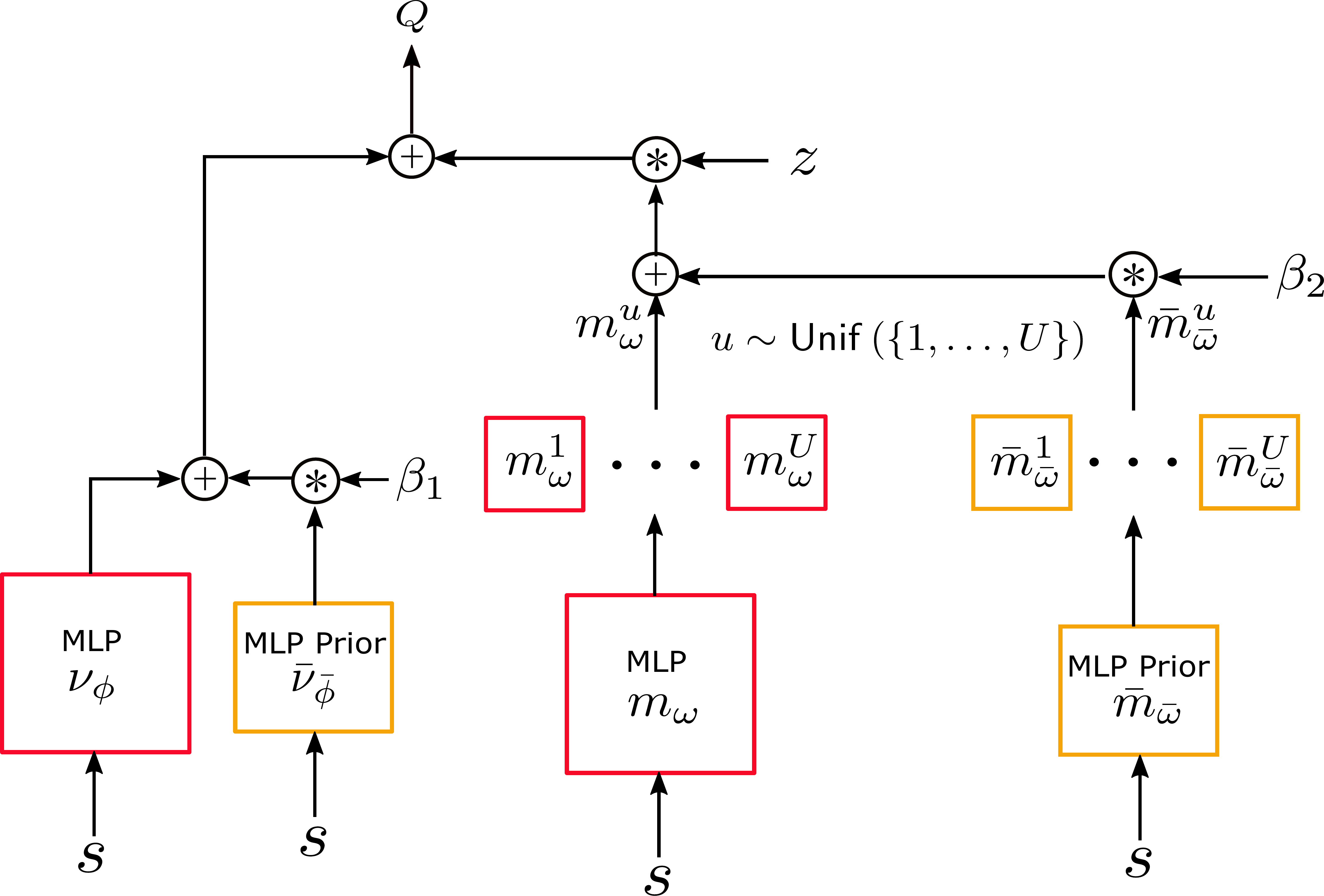}
        \caption{Parameterized Indexed Networks}  
		\label{fig:double_nets}
\end{figure}

\begin{figure*}[ht]
    \centering
    \includegraphics[width=1.0\textwidth]{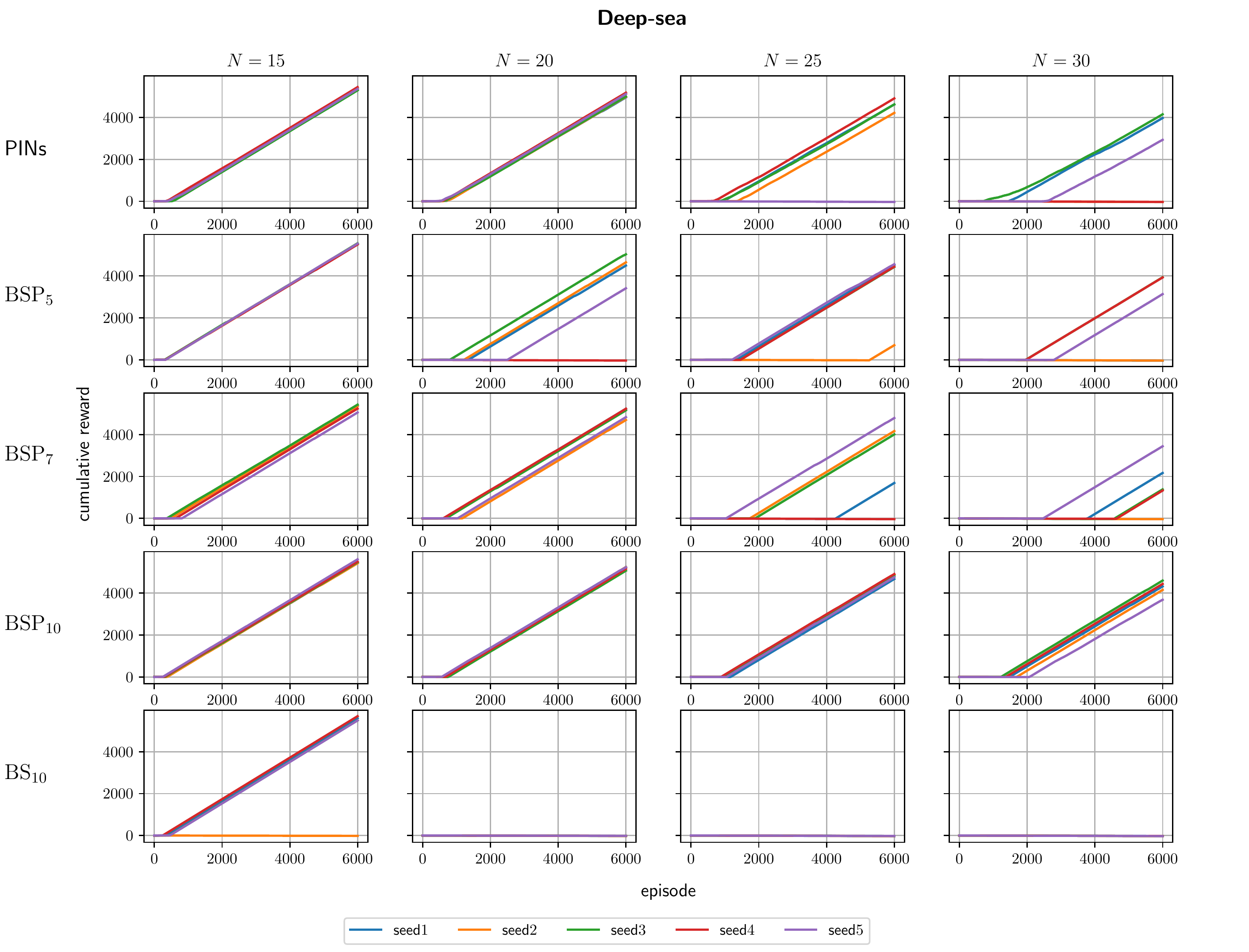}
    %Originall 0.9
    \caption{Comparison on cumulative reward with different problem size $N$s among PINs, $\mathrm{BSP}_5$, $\mathrm{BSP}_7$, $\mathrm{BSP}_{10}$ and $\mathrm{BS}_{10}$}
    \label{fig:deep_sea_exp}
\end{figure*}

The overall architecture of PINs is depicted in Figure \ref{fig:double_nets}. We incorporate a prior mechanism by following \cite{osband2018randomized}, where each trainable network is paired with an \emph{additive} prior network. Instead of explicitly regularizing to a prior, i.e. the $\beta W_2\Sp{Q_{\theta, Z}, \overline{Q}}^2$ term in $\mathcal{L}_{\mathrm{dist}}$, we \emph{add} a randomized prior function to the $Q$ function \cite{osband2018randomized}. The prior networks share the exact same architecture as their paired trainable networks and are fixed after random initialization. The structure of the uncertainty network is different from that of the mean network in the following manner:
\begin{itemize}
    \item $\mathsf{Softplus}$ \cite{glorot2011deep} output layer to ensure non-negativity in output 
    values; 
    \item Multiple bootstrapped output heads to encourage diversity in uncertainty estimates: the last hidden layer of the uncertainty net spins out $U$ different output heads. Head $u \in \{1, \dots, U\}$ maps the input state $s$ to uncertainty estimates $m_\omega^u (s, a)$ for all $a \in \mathcal{A}$. Despite that all but the output layer are shared among different heads, we promote diversity by (1) applying a bootstrap mask $\mathsf{Ber}(0.5)$ to each output head so that data in buffer is not completely shared among heads \cite{osband2016deep}, and (2) each head is paired and trained together with a slightly different prior $\bar{m}_{\bar{\omega}}^u(s,a)$. 
\end{itemize}
For each episode $l$, we first sample an index $z^l \sim \mathcal{N}(0, 1)$ and an output head $u \sim \mathsf{Unif}(\{1, \dots, U\})$. The agent then acts greedily with respect to the sampled value function with \emph{additive} priors $Q^u_{z^l} = \nu + m^u z^l + \beta_1 \bar{\nu} + \beta_2 \bar{m}^u z^l$ for consistent exploration in episode $l$, where $\beta_1, \beta_2$ are scaling hyper-parameters for the mean prior and the uncertainty prior, respectively. Here the additive prior distribution can be seen as $\overline{Q}^u_{z^l} = \beta_1 \bar{\nu} + \beta_2 \bar{m}^u z^l$. A detailed training algorithm for PINs is included in supplemental material A.

\section{Experimental Results}

We evaluate the performance of PINs on two benchmark problems, \textit{Deep-sea} and \textit{Cartpole Swing-up}, that highlight the need for deep exploration from Deepmind \textit{bsuite} \cite{osband2019bsuite}, and compare it with the state-of-the-art ensemble sampling methods: the bootstrapped DQN with additive prior networks ($\mathrm{BSP}_K$) \cite{osband2018randomized} and the bootstrapped DQN without prior mechanism ($\mathrm{BS}_{K}$) \cite{osband2016deep}, where $K$ denotes the number of networks in the ensemble.

\subsection{Deep-sea}
Deep-sea is a family of grid-like deterministic environments \cite{osband2017deep}, which are indexed by a problem size $N \in \mathbb{N}$, with $N \times N$ cells as states, and sampled action mask $M_{ij}\sim\mathsf{Ber}(0.5)$, $i, j\in\Bp{1, \dots, N}$. Action set $\mathcal{A} = \{0, 1\}$, and at cell $\Sp{i, j}$, $M_{ij}$ represents action ``left'' and $1-M_{ij}$ represents action ``right''. The agent always starts in the upper-left-most cell at the beginning of each episode. At each cell, action ``left'' (``right'') takes the agent to the cell immediately to the left (right) and below. Thus, each episode lasts exactly $N$ time steps and the agent can never revisit the same state within an episode. No cost or reward is associated with action ``left''. However, taking action ``right'' results in a cost of $0.01/N$ in cells along the main diagonal except the lower-right-most cell where a reward of $1$ is given for taking action ``right''. Therefore, the optimal policy is picking action ``right'' at each step giving an episodic reward of $0.99$. All other policies generate zero or negative rewards. The usual dithering methods will need $\Omega\Sp{2^N}$ episodes to learn the optimal policy, which grows exponentially with the problem size $N$.

Figure \ref{fig:deep_sea_exp} shows the cumulative reward of our PINs and various ensemble models on Deep-sea with four different sizes $N = 15, 20, 25, 30$ for $6K$ episodes of learning. Each approach is evaluated over $5$ different random seeds. We consider that the agent has learned the optimal policy when there is a linear increase in cumulative reward by the end of training. For example, our agent with PINs successfully learned the optimal policy in $3/5$ seeds when $N = 30$, in $4/5$ seeds when $N = 25$ within $6K$ episodes. All networks are MLP with $1$ hidden layer. For PINs, the mean network has $300$ units and the uncertainty network has $512$ units with $U=10$ output heads in the output layer. We set $\sigma = 2$ for the added noise without any decay for experiments on Deep-sea, and $\beta_1 = \beta_2 = 2$. For ensemble models \cite{osband2018randomized}, each single network in the ensemble contains $50$ hidden units, and we set prior scale $\beta=10$ for BSP as recommended in \cite{osband2018randomized}. For BS, we simply let $\beta=0$ to exclude the prior mechanism. For all bootstrapping, we use Bernoulli mask with $p = 0.5$. We see that the performance of PINs is comparable to that of ensemble methods $\mathrm{BSP}_5$ and $\mathrm{BSP}_7$ with additive priors. Also, note that the PINs are relatively more efficient in computation as PINs only require $2$ back-propagations per update while $\mathrm{BSP}_K$ need $K$ backward passes per update. In addition, even equipped with $10$ separate networks, $\mathrm{BS}_{10}$ struggles to learn the optimal policy as $N$ increases, which highlights the significance of a prior for efficient exploration.

Conceptually, the main advantage of our PINs is that it distributes the tasks of learning an value function and measuring uncertainty in estimates into two separate networks. Therefore, it is possible to further enhance exploration by using a more carefully-crafted and more complex design of the uncertainty network and its prior without concerning about the stability of learning in the mean network. As an example, a more delicate design that induces diverse uncertainty estimates for unseen state-action pairs can potentially drive exploration to the next-level. We consider experimenting with different architectures of the uncertainty network as our future work.

\subsection{Cartpole Swing-up}
\begin{figure}[ht]
\centering
\includegraphics[width=1.0\linewidth]{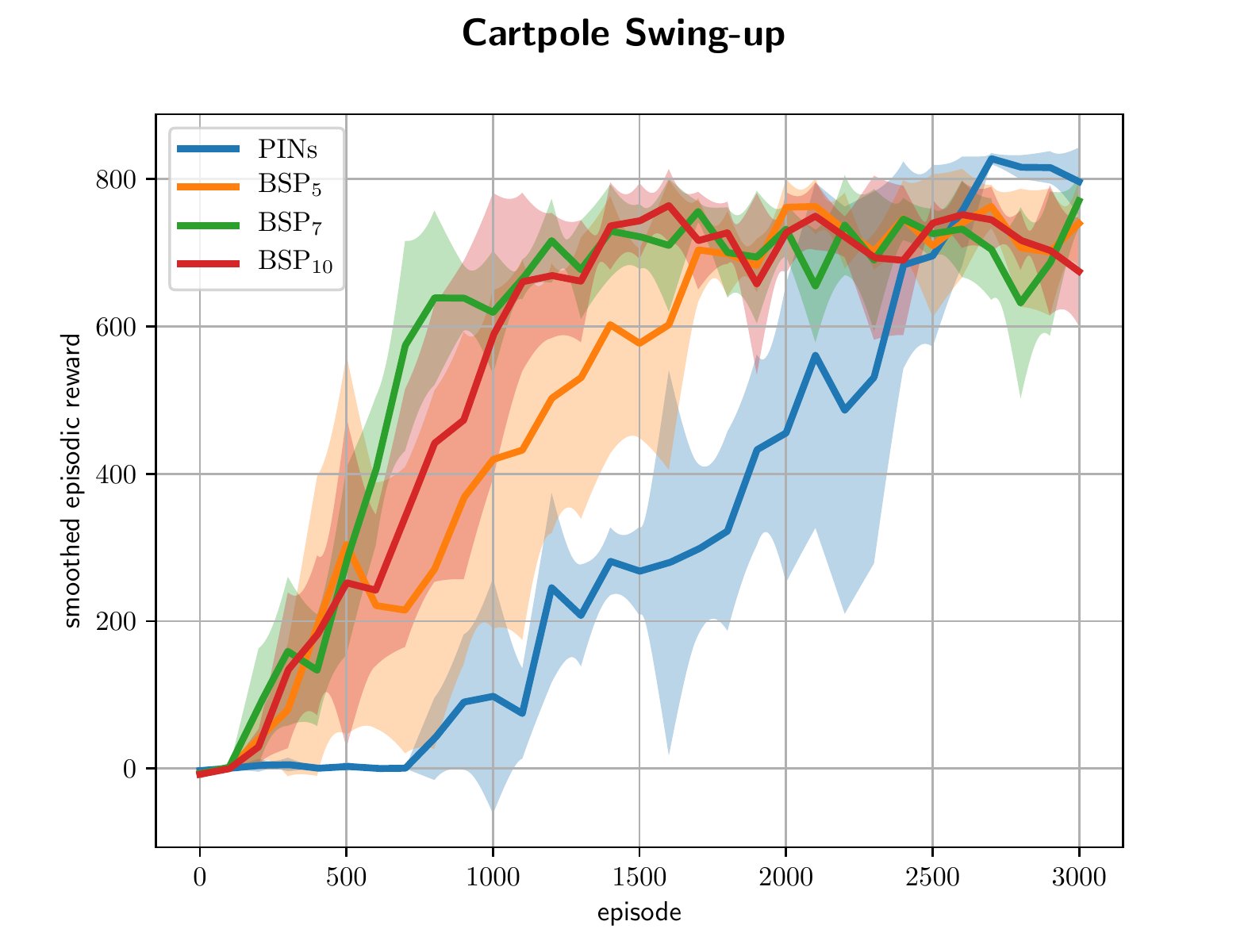}
\caption{Comparison on smoothed episodic reward: each point is the maximum episodic reward within the most recent $100$ episodes. We plot the average performance over $5$ random seeds for each method and the shaded area represents $+/-$ standard deviation.}
\label{fig:cartpole_per}
\end{figure}

We next present our results on a classic benchmark problem: \textit{Cartpole Swing-up} \cite{sutton2018reinforcement}, \cite{osband2019bsuite}, which requires learning a more complex mapping from continuous states\footnote{A $8$ dimensional vector in \textit{bsuite} where we set the threshold of position $x$ to be $5$.} to action values. Unlike the original cartpole, the problem is modified so that the pole is hanging down initially and the agent only receives a reward of $1$ when the pole is nearly upright, balanced, and centered\footnote{Reward $1$ only when $\cos(\theta) > 0.95, \abs{x} < 1, \abs{\Dot{x}} < 1$ and $|\Dot{\theta}| < 1$.} Further, a cost of $0.05$ is added to move the cart and the environment is simulated under timescale of $0.01s$ with a maximum of $1000$ time-steps per episode.

Figure \ref{fig:cartpole_per} shows a comparison on smoothed episodic reward in $3000$ episodes of learning over $5$ different random seeds. We failed to train a $\mathrm{BS}_{10}$ agent that can learn a performant policy on this environment; thus, the results are omitted. To demonstrate computational savings of PINs, all networks used here have $3$ hidden layers with $50$ units in each layer. Besides, the uncertainty network spins out only $U=2$ output heads. For added noise in PINs, we use $\sigma=2$ and linearly decay it to $1$ over the course of training to promote concentration in the approximate posterior distribution. As for prior scale, we use $\beta_1=\beta_2=2$ for our PINs, and $\beta=30$ for BSP as in \cite{osband2018randomized}. We see that PINs achieved similar performance to that of the ensemble models but with only two separate neural networks. Additionally, although PINs seem to progress slowly compared to $\mathrm{BSP}_K$, they exhibit smaller variance in performance by the end of learning. This experiment demonstrates the computational efficiency that can be brought by PINs and by index sampling for representing uncertainty in action-value estimates.

\section{Conclusion}

In this paper, we present a parameterized indexed value function which can be learned by a distributional version of TD. After proving its efficiency in the tabular setting, we introduce a computationally lightweight dual-network architecture, Parameterized Indexed Networks (PINs), for deep RL and show its efficacy through numerical experiments. To the best of our knowledge, we lead the first study of index sampling to achieve efficient exploration in the field of RL.

However, several open questions are still left unanswered. It would be worthwhile to explore other designs for uncertainty and prior networks, and experiment with other distributional metrics to see if one can obtain stronger theoretical guarantees and/or better empirical performance. It would be very7 interesting to combine the ideas of PINs with more advanced neural architectures such as convolutional networks, and evaluate its performance on Atari games with sparse rewards. We leave this and many possible extensions of PINs as our future work, and we hope that this work can serve as a starting point for future studies on the potential of index sampling for efficient RL.

\section{Acknowledgement}
We thank Benjamin Van Roy, Chengshu Li for the insightful discussions, and Rui Du for comments on the earlier drafts.

\medskip

\bibliography{References}
\bibliographystyle{aaai}

\newpage
\appendix 
\onecolumn

\section{Training Algorithm for PINs}

Algorithm \ref{algo:pin} describes the subroutine $\mathsf{agent.learn\_from\_buffer()}$ called in Algorithm 2 $\mathsf{live\_IS}$, where the agent runs several SGD steps to update the mean and the uncertainty network. For each observed transition, we sample a $U$-dimensional binary mask $M$ with $M[j] \sim \mathsf{Ber}(0.5)$ defining data sharing among different output heads in the uncertainty network. In acting during inference, we first randomly sample an index and sample an output head in the uncertainty network to obtain $Q$-values. The agent then acts greedily with respect to the sampled value function during the next episode of interaction.                    

We used a batch size of $64$, and number of batches $N = 10$ for experiments on Deep-sea, $N = 100$ on Cartpole Swing-up. The target networks are updated every $10$ episodes of learning. Adam optimizer with learning rate $10^{-3}$ is used to train both networks. 

\begin{algorithm}[ht]
	\caption{$\mathsf{learn\_from\_buffer}$}
	\label{algo:pin}
	\SetAlgoLined
		\KwIn{$\begin{array}{c}
		\begin{aligned}
		 &\mathsf{buffer}\text{: replay buffer of past transitions} \\
		 & N\text{: number of minibatches}\\
		 & b\text{: batch size}\\
		 &\nu_\phi, m_\omega \text{: trainable networks}\\
		 &\nu_{\tilde{\phi}}, m_{\tilde{\omega}} \text{: target networks, parameters are copied from the trainable nets periodically}\\
		&\bar{\nu}_{\bar{\phi}}, \bar{m}_{\bar{\omega}} \text{: untrainable prior networks} \\
		&\beta_1, \beta_2 \text{: prior scale } \\ &\gamma \text{: discount factor}\\
		&\alpha_\nu, \alpha_m \text{: learning rates} \\
		&\sigma \text{: noise/perturbation scale}
		\end{aligned}
		\end{array}$}
		\For{$n$ in $\Sp{1, 2, \dots, N}$}{
			$\mathcal{D}\leftarrow\mathsf{buffer.sample\_minibatch}\Sp{b}$\\
			Calculate $\bar{a}_i \in \argmax_{a'\in\mathcal{A}} \Sp{\nu_{\tilde{\phi}}+ \beta_1 \bar{\nu}_{\bar{\phi}}}(s'_i, a') $ for each $\Sp{s_i, a_i, r_i, s'_i, M_i}\in\mathcal{D}$\\
			Compute gradient for updating the mean network
			$$\nabla_\phi \mathcal{L}_\nu(\phi, \tilde{\phi}, \bar{\phi}, \mathcal{D}) =  \frac{1}{\abs{\mathcal{D}}}\sum_{\Sp{s_i ,a_i, r_i, s'_i, M_i}\in\mathcal{D}} \nabla_\phi \bigg( \Big(\nu_{\phi}+\beta_1\bar{\nu}_{\bar{\phi}}\Big) \Sp{s_i, a_i}-\Sp{r_i+\gamma\Sp{\nu_{\tilde{\phi}}+\beta_1\bar{\nu}_{\bar{\phi}}}\Sp{s'_i, \bar{a}_i} }\bigg)^2 $$\\
			Randomly select a head: $u_i\sim\mathsf{Unif}\Sp{\mathcal{I}_i}$, where $\mathcal{I}_i=\Bp{j\in\Bp{1, \dots, U}\mid M_i[j]=1}$ for each $\Sp{s_i, a_i, r_i, s'_i, M_i}\in\mathcal{D}$\\
			Compute gradient for updating the uncertainty network
			$$\nabla_\omega \mathcal{L}_m(\omega, \tilde{\omega}, \bar{\omega}, \mathcal{D}) =  \frac{1}{\abs{\mathcal{D}}}\sum_{\Sp{s_i ,a_i, r_i, s'_i, M_i}\in\mathcal{D}} \nabla_\omega \bigg( \Big(m^{u_i}_{\omega}+\beta_2\bar{m}^{u_i}_{\bar{\omega}}\Big) \Sp{s_i, a_i}-\Big(\sigma +\gamma \Big(m^{u_i}_{\tilde{\omega}}+\beta_2\bar{m}^{u_i}_{\bar{\omega}} \Big) \Sp{s'_i, \bar{a}_i} \Big) \bigg)^2 $$\\
			Update
			$$\phi \leftarrow \phi - \alpha_\nu \nabla_\phi \mathcal{L}_\nu (\phi, \tilde{\phi}, \bar{\phi}, \mathcal{D})$$ 
			$$\omega \leftarrow \omega - \alpha_m \nabla_\omega \mathcal{L}_m (\omega, \tilde{\omega}, \bar{\omega}, \mathcal{D})$$\\
			Adjust learning rate by Adam optimizer
		}
\end{algorithm}

\section{Comparison on Different Action Selection Schemes}
We compare two different schemes in selecting actions for training our PINs:  
$$\bar{a} \in  \argmax_{a'\in\mathcal{A}}\E\Mp{Q_{\tilde{\theta}, Z'}\Sp{s', a'} + \bar{Q}_{\bar{\theta}, Z'}\Sp{s', a'} } = \argmax_{a'\in\mathcal{A}} \Bp{\nu_{\tilde{\phi}}(s', a') + \beta_1 \bar{\nu}_{\bar{\phi}} (s', a')}$$ 
as in line 3 of our Algorithm \ref{algo:pin}, where we denote the additive prior distribution as $\bar{Q}_{\bar{\theta}, Z'}\Sp{s, a} = \beta_1 \bar{\nu}_{\bar{\phi}}(s, a) + \beta_2 \bar{m}_{\bar{\omega}}(s,a) Z'$, and $Z'$ is a standard normal random variable. Note that $\bar{a}$ does not depend on which head is picked in the uncertainty network; thus, we omit the superscript for identifying heads on $m(s,a)$ here. 

Alternatively, we can use a sampled value function for action selection:   
$$\tilde{a} \in \argmax_{a' \in \mathcal{A}} \Bp{Q^u_{\tilde{\theta}, z}(s', a') + \bar{Q}^u_{\bar{\theta}, z}(s', a')} = \argmax_{a' \in \mathcal{A}} \Bp{\nu_{\tilde{\phi}}(s', a') + m^u_{\tilde{\omega}}(s',a')z + \beta_1 \bar{\nu}_{\bar{\phi}}(s', a') + \beta_2 \bar{m}^u_{\bar{\omega}}(s', a')z},$$
where $z$ is sampled from $\mathcal{N}(0, 1)$, and $u$ denotes a sampled head in the uncertainty network for a given transition $(s, a, r, s')$ as in line 5 of Algorithm \ref{algo:pin}. We compare $\bar{a}$ and $\tilde{a}$ for training PINs on Deep-sea with $N = 20, 25$. As we see in Figure \ref{fig:compare_action}, $\bar{a}$ gives a more robust performance across different random seeds. 

We also point out that many other options are available for action selection besides $\bar{a}$, $\tilde{a}$. As an example, instead of sampling one single $z$, we may sample multiple $K > 1$ $z$'s independently so that there are $K$ sampled values for each action $a' \in \mathcal{A}$. Then, given the $K$ sampled values, we can consider picking an optimistic estimate for each action before taking the argmax. At the same time, we also need to take the learning stability into account, especially for training the mean network. We believe that this could be an interesting direction to explore in the future work of PINs and of index sampling in general.

\begin{figure}[ht]
    \centering
        \includegraphics[width=\linewidth]{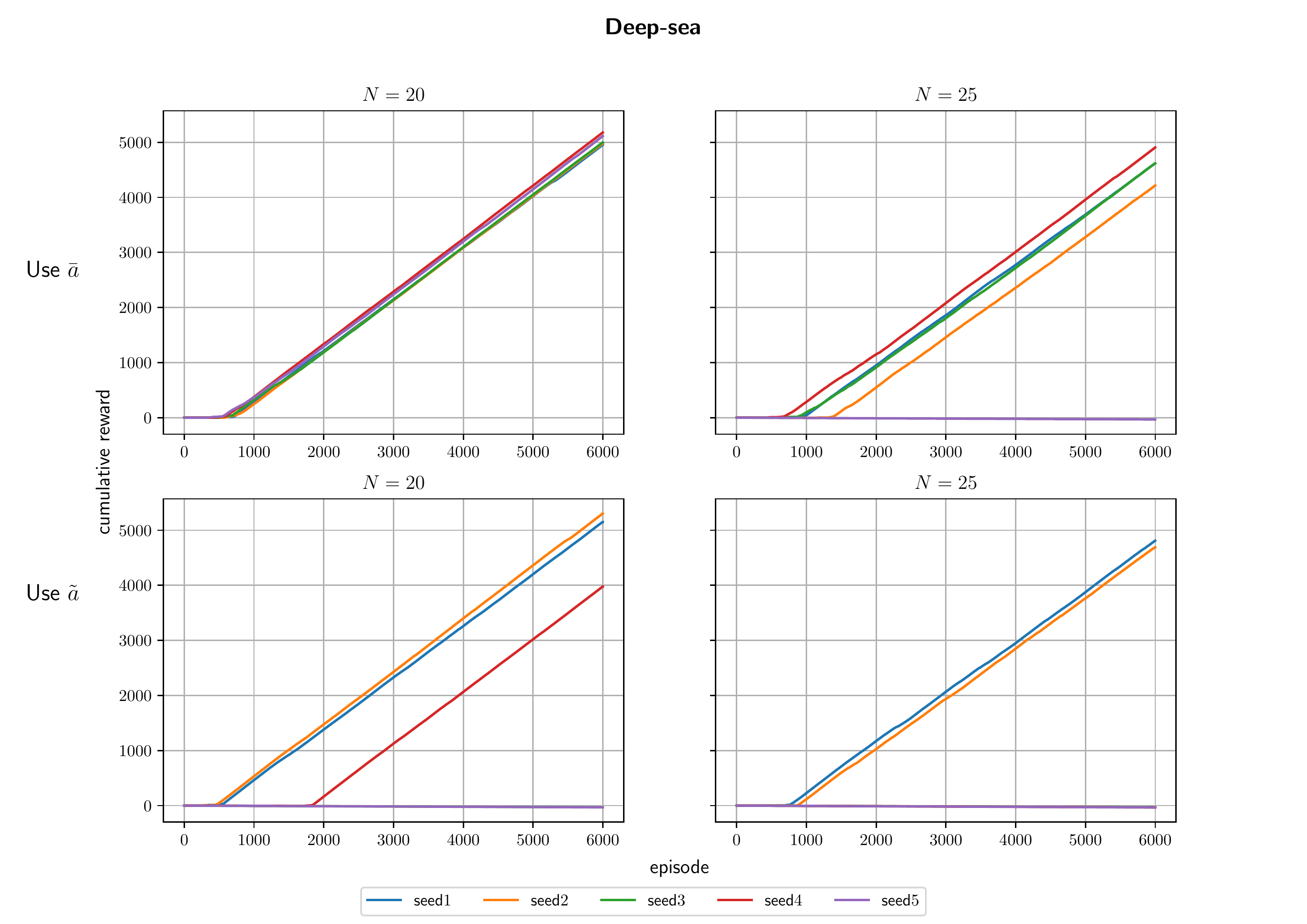}
        \caption{Performance comparison between $\bar{a}$ and $\tilde{a}$ on Deep-sea over $5$ seeds}  
		\label{fig:compare_action}
\end{figure}

\setcounter{assumption}{0}
\setcounter{theorem}{0}
\setcounter{lemma}{0}

\section{Proof of Theorem 1}
For completeness, we first restate the necessary assumptions and theorem here.
\begin{assumption}
    \label{assume:mdp}
    The MDP $\mathcal{M}=\Sp{\mathcal{S}, \mathcal{A}, H, R, \Prob, \rho}$ is finite-horizon time-inhomogeneous such that the state space can be factorized as $\mathcal{S}=\mathcal{S}_0\cup\mathcal{S}_1\cup\dots\cup\mathcal{S}_{H-1}$ and each $s_h\in \mathcal{S}_h$ can be written as a pair $s_h=\Sp{h, x}, x\in\mathcal{X}$ for each $h\in\Bp{0, 1, \dots, H-1}$ with $\abs{\mathcal{X}}<\infty$. Further, we have $\abs{\mathcal{A}}<\infty$ and $\Prob\Sp{s_{h+1}\in\mathcal{S}_{h+1}\mid s_h\in\mathcal{S}_h, a_h}=1$ for any $a_h\in\mathcal{A}$, $h<H-1$ and the MDP will terminate with probability $1$ after taking action $a_{H-1}$. Finally, the reward is always binary, which means that $R\Sp{s}\in\Bp{0, 1}$ for any $s\in\mathcal{S}$.
\end{assumption}
\begin{assumption}
	\label{assume:dir}
	With the above setup, for each $(h, x, a) \in \{0, 1, \dots, H - 2\} \times \mathcal{X} \times \mathcal{A}$, the outcome distribution is drawn from a Dirichilet prior $\mathcal{P}_{h, x, a}\sim\mathrm{Dirichlet}\Sp{\bm{\alpha}^0_{h, x, a}}$ for $\bm{\alpha}^0_{h, x, a} \in \R_{+}^{2 \abs{\mathcal{X}}}$, and each $\mathcal{P}_{h, x, a}$ is drawn independently. Further, assume that there exists some $\beta\geq 3$ such that $\bm{1}^T\bm{\alpha}^0_{h, x, a}=\beta$ for all $\Sp{h, x, a}$. 
\end{assumption}
\begin{theorem}
	\label{theo:regret}
	Consider an agent $\mathsf{WTD}$ with infinite buffer, greedy actions and an MDP stated in Assumption \ref{assume:mdp} with planning horizon $H$. Under Assumption \ref{assume:dir} with $\beta\geq 3$, if Algorithm 1 is applied with $\sigma^2=3H^2$, $\bar{\theta}=H$ and $\frac{\sigma^2}{\sigma_0^2}=\beta$, for any number of episodes $L\in\mathbb{N}$, we have
	\begin{align*}
	\mathrm{BayesRegret}\Sp{\mathsf{WTD}, L}&\leq 5H^2\sqrt{\beta\abs{\mathcal{X}}\abs{\mathcal{A}}L\log_+\Sp{2\abs{\mathcal{X}}\abs{\mathcal{A}}HL}}\log_+\Sp{1+\frac{L}{\abs{\mathcal{X}}\abs{\mathcal{A}}}}\\
	&=\widetilde{O}\Sp{H^2\sqrt{\abs{\mathcal{X}}\abs{\mathcal{A}}L}},
	\end{align*}
	where $\widetilde{O}\Sp{\cdot}$ ignores all poly-logarithmic terms and $\log_+\Sp{x}=\max\Bp{1, \log\Sp{x}}$.
\end{theorem}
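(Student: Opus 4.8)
The plan is to flesh out the RLSVI-style argument outlined in the sketch, carrying each of its three pieces from a schematic claim to a quantitative bound. I would first take the regret decomposition of Lemma \ref{lemma:sto_optim} as the backbone: once the optimism condition (\ref{equ:optim_1}) is verified, the expected per-episode shortfall $\E\Mp{\Delta_l}$ is dominated by the accumulated operator gap $\E\Mp{\sum_{h=0}^{H-1}\Sp{\Sp{F_{l,h}-F_{\M,h}}Q^l_{Z,h+1}}\Sp{x_h^l,a_h^l}}$. This reduces the theorem to two subproblems: (i) certifying stochastic optimism so that (\ref{equ:optim_1}) holds, and (ii) controlling the summed operator gap.

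For (i), which I expect to be the crux, I would prove by backward induction on $h$ (from $H-1$ down to $0$) that the learned Gaussian value is stochastically optimistic, $Q^l_{Z,h}\Sp{x,a}\mid\mathcal{H}_{l-1}\geq_{SO}Q^*_{\M,h}\Sp{x,a}\mid\mathcal{H}_{l-1}$. The base case at $h=H-1$ reduces to comparing a Gaussian against the $\mathrm{Dirichlet}$ posterior of the immediate expected reward; more generally the true posterior value at any stage is bounded in $\Mp{0,H}$ since rewards are binary over at most $H$ steps, and I would invoke a stochastic-optimism lemma showing that a Gaussian whose mean is inflated by the prior mean $\bar\theta=H$ and whose variance scales with $\sigma^2=3H^2$ dominates such a bounded posterior. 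These are precisely the constants that must be tuned so the Gaussian envelope is wide enough, and reconciling them with the tail of the conjugate $\mathrm{Dirichlet}$ posterior is the delicate step. The inductive step then uses that $F_{l,h}$ is a backup built from the empirical-plus-prior mean (\ref{equ:nu_update}) and the noise scale (\ref{equ:m_update}), together with the preservation-of-optimism result (Lemma 2 of \cite{osband2017deep}) guaranteeing that $\max_{a'}$ and the stochastic backup both preserve $\geq_{SO}$; taking expectation over $\mathcal{H}_{l-1}$ then yields (\ref{equ:optim_1}).

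For (ii), I would expand $\Sp{F_{l,h}-F_{\M,h}}Q^l_{Z,h+1}\Sp{x,a}$ into a mean-estimation bias, coming from the shrunken empirical average minus the true one-step expectation, and a mean-zero Gaussian perturbation $m^l\Sp{h,x,a}Z_{h,x,a}$. Conditioning on the data and applying concentration of the $\mathrm{Dirichlet}$ posterior mean around the true outcome distribution, the bias is bounded on the order of the noise scale up to a $\sqrt{\log_+}$ factor, while the perturbation contributes its own standard deviation; since $m^l\Sp{h,x,a}$ behaves like $\sigma/\sqrt{\nls}$ once data accumulate, each stage-visit contributes $O\Sp{H/\sqrt{\nls}}$.

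It remains to sum over episodes and stages. Using the elementary harmonic bound, the contributions of the episodes that visit a fixed factored slot $\Sp{h,x,a}$ telescope to $O\Sp{\sqrt{N_{h,x,a}}}$, where $N_{h,x,a}$ is its total visit count. A Cauchy--Schwarz step across the $H\aX\aA$ distinct slots, using that the total number of visits is $HL$, gives $\sum_{h,x,a}\sqrt{N_{h,x,a}}\leq\sqrt{H\aX\aA}\cdot\sqrt{HL}=H\sqrt{\aX\aA L}$. Multiplying by the per-visit scale $O\Sp{H}$ inherited from $\sigma=\sqrt{3}H$ produces the leading $H^2\sqrt{\aX\aA L}$, with the explicit $\sqrt{\beta}$ factor arising from the prior variance ratio $\sigma^2/\sigma_0^2=\beta$ and the $\log_+$ terms from the union and concentration bounds; assembling the constants then gives the stated $5H^2\sqrt{\beta\aX\aA L\log_+\Sp{2\aX\aA HL}}\log_+\Sp{1+\frac{L}{\aX\aA}}$.
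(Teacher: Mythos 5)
Your backbone (Lemma \ref{lemma:sto_optim} plus stochastic optimism) and your part (i) match the paper: the paper also verifies condition (\ref{equ:optim_1}) by backward induction, combining a Gaussian-versus-Dirichlet stochastic-optimism lemma (a Gaussian with mean $\geq\bm{\alpha}^TV/\bm{\alpha}^T\bm{1}$ and variance $\geq 3\,\mathrm{Span}\Sp{V}^2/\bm{\alpha}^T\bm{1}$ dominates $P^TV$ for $P\sim\mathrm{Dirichlet}\Sp{\bm{\alpha}}$) with a monotonicity/preservation lemma for the operator $F_{l,h}$. That part of your plan is sound and is essentially the paper's argument.

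The genuine gap is in part (ii), in how you bound the bias of $F_{l,h}-F_{\M,h}$. The paper never invokes concentration of the posterior around the true outcome distribution; it exploits exact conjugacy cancellation. Conditioned on the history, $\mathcal{P}_{h,x,a}$ has posterior $\mathrm{Dirichlet}\Sp{\bm{\alpha}^0+n^l\hatp^l}$ and $Q^l_{Z,h+1}$ is conditionally independent of $\M$, so $\E\Mp{F_{\M,h}Q^l_{Z,h+1}\mid\mathcal{H}_{l-1},\dots}=V_Q^T\Sp{n^l\hatp^l+\bm{\alpha}^0}/\Sp{n^l+\beta}$: the empirical-average part of $F_{l,h}$ cancels \emph{exactly}, leaving only the prior bias $\Sp{\beta\bar{\theta}-V_Q^T\bm{\alpha}^0}/\Sp{n^l+\beta}\leq\beta\Sp{\bar{\theta}+\Norm{V_Q}_{\infty}}/\Sp{n^l+\beta}$, which decays like $1/n^l$ and, by the harmonic bound, sums to only a polylogarithmic-in-$L$ quantity. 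Your route instead pays $\mathrm{Span}\Sp{V_Q}\sqrt{\log_+/n^l}$ per visit, which sums to $\sqrt{L}$ order, and this is where a second problem compounds it: the relevant $V_Q$ is $V_{Q^l_{Z,h+1}}$, the \emph{learned, noise-injected} value function, whose sup-norm is not $O\Sp{H}$ --- the paper needs a dedicated lemma (Lemma \ref{lemma:maxV_bound}) showing $\E\Mp{\max_{l,h}\Norm{V_{Q^l_{Z,h+1}}}_{\infty}}\leq 2H+2H^2\sqrt{\log\Sp{2\aX\aA HL}}$, i.e.\ $\widetilde{O}\Sp{H^2}$. So your claim that ``each stage-visit contributes $O\Sp{H/\sqrt{\nls}}$'' is unjustified; fixed honestly, your concentration route gives a leading term of order $H^3\sqrt{\aX\aA L}$ up to logs, an $H$ factor worse than the theorem. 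In the paper's decomposition the $\widetilde{O}\Sp{H^2}$ norm only multiplies the $1/n$-summable prior bias, so it is harmless, and only the mean-zero perturbation (handled with the expected-maximum bound $\E\Mp{w^l}\leq\sqrt{2\log\Sp{\aX\aA}\E\Mp{\sigma^l{}^2}}$ to account for the greedy selection effect, plus Cauchy--Schwarz) carries the $\sqrt{L}$ term. Two smaller omissions: the selection-bias argument just mentioned (a ``mean-zero'' term does not contribute zero because the agent visits pairs where the noise is large), and the final constant $5$ requires a case split --- for $L\leq 25\beta H^2\aX\aA$ one uses the trivial bound $\E\Mp{\sum_l\Delta_l}\leq HL$, and only for larger $L$ does the assembled bound dominate --- which ``assembling the constants'' does not cover.
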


\subsection{Stochastic Bellman Operator}
Based on the updating rules for $\nu^l\Sp{h, x, a}$ and $m^l\Sp{h, x, a}$ in the main paper, for a general function $Q\in\R^{\abs{\mathcal{X}}\abs{\mathcal{A}}}$, Algorithm 1 defines a functional operator $F_{l, h}$:
\begin{equation}
\label{equ:operator}
    \begin{split}
        F_{l, h}Q\Sp{x, a}=\frac{\sum_{\Sp{r, x'}\in\mathcal{D}^{l-1}_{h, x, a}}\Sp{r+\max_{a'\in\mathcal{A}}Q\Sp{x', a'}}+\beta\bar{\theta}}{n^l\Sp{h, x, a}+\beta}+ \frac{\sqrt{n^l\Sp{h, x, a}}\sigma+\beta\sigma_0}{n^l\Sp{h, x, a}+\beta}\cdot Z_{h, x, a}
    \end{split}
\end{equation}
where $Z_{h, x, a}\sim\mathcal{N}\Sp{0, 1}$ is independent from $Q$. Specifically, with this operator, we have $Q_{Z, h}^l=F_{l, h}Q_{Z', h+1}^l$, where $Z$ and $Z'$ are independent.

Recall the definition of $Q^*_{\M, h}$ in equation (1) of the main paper. We denote $F_{\M, h}$ as the true Bellman operator that satisfies $Q^*_{\M, h}=F_{\M, h}Q^*_{\M, h+1}$.

To prove Theorem \ref{theo:regret}, we resort to the following key lemma proved in \cite{osband2017deep}. 
\begin{lemma}
	\label{lemma:sto_optim}
	Let $\Sp{Q^l_{Z, 0}, \dots, Q^l_{Z, H}}$ be the sequence of state-action value function learned by Algorithm 1, where $Q^l_{Z, H}=\ve{0}$, and $\pi^l$ be the greedy policy based on them. For any episode $l\in\mathbb{N}$, if we have
	\begin{equation}
	\label{equ:optim}
	\E\Mp{\max_{a'\in\mathcal{A}}Q^l_{Z, 0}\Sp{x^l_0, a'}}\geq\E\Mp{\max_{a'\in\mathcal{A}}Q^*_{\M, 0}\Sp{x^l_0, a'}}
	\end{equation}
	then,
    \begin{equation}
    \label{equ:raw_bound}
        \begin{split}
            \E\Mp{V^*_{\M, 0}\Sp{x_0^l}-V^{\pi^l}_{\M, 0}\Sp{x_0^l}}\leq\E\Mp{\sum_{h=0}^{H-1}\Sp{F_{l, h}Q^l_{Z, h+1}-F_{\M, h}Q^l_{Z, h+1}}\Sp{x_h^l, a_h^l}}
        \end{split}
    \end{equation}
\end{lemma}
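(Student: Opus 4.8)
The plan is to prove the bound (\ref{equ:raw_bound}) by a telescoping Bellman decomposition along the trajectory generated by $\pi^l$, invoking the hypothesis (\ref{equ:optim}) only once, at the root, to trade the true optimal value for the learned optimistic estimate. Since $\pi^l$ is greedy with respect to $Q^l_{Z, h}$, the visited action satisfies $Q^l_{Z, h}\Sp{x_h^l, a_h^l}=\max_{a'\in\mathcal{A}}Q^l_{Z, h}\Sp{x_h^l, a'}$ at every stage; in particular $\E\Mp{\max_{a'}Q^l_{Z, 0}\Sp{x_0^l, a'}}=\E\Mp{Q^l_{Z, 0}\Sp{x_0^l, a_0^l}}$. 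Because $V^*_{\M, 0}\Sp{x_0^l}=\max_{a'}Q^*_{\M, 0}\Sp{x_0^l, a'}$, the optimism hypothesis (\ref{equ:optim}) immediately yields
$$\E\Mp{V^*_{\M, 0}\Sp{x_0^l}-V^{\pi^l}_{\M, 0}\Sp{x_0^l}}\leq\E\Mp{Q^l_{Z, 0}\Sp{x_0^l, a_0^l}-V^{\pi^l}_{\M, 0}\Sp{x_0^l}},$$
so it suffices to expand the right-hand side.

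For the expansion I would introduce the per-stage residual $\Delta_h=Q^l_{Z, h}\Sp{x_h^l, a_h^l}-Q^{\pi^l}_{\M, h}\Sp{x_h^l, a_h^l}$ along the $\pi^l$-trajectory, where $Q^{\pi^l}_{\M, h}$ is the true action-value of $\pi^l$, so that $V^{\pi^l}_{\M, 0}\Sp{x_0^l}=Q^{\pi^l}_{\M, 0}\Sp{x_0^l, a_0^l}$ and $\Delta_0$ is exactly the integrand above. The key algebraic step uses three facts: the fixed-point identity $Q^l_{Z, h}=F_{l, h}Q^l_{Z, h+1}$ (valid because the indices injected at stage $h$ are independent of those at stage $h+1$); the explicit form $\Sp{F_{\M, h}Q}\Sp{x, a}=\E_{\M}\Mp{r_{h+1}+\max_{a'}Q\Sp{x_{h+1}, a'}\mid x_h=x, a_h=a}$ of the optimal Bellman operator; and the one-step Bellman equation for $Q^{\pi^l}_{\M, h}$. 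Adding and subtracting $\Sp{F_{\M, h}Q^l_{Z, h+1}}\Sp{x_h^l, a_h^l}$, and noting that the greedy action at the next state realizes the max, i.e. $\max_{a'}Q^l_{Z, h+1}\Sp{x_{h+1}, a'}=Q^l_{Z, h+1}\Sp{x_{h+1}, a_{h+1}^l}$, lets the two Bellman backups telescope against each other and produces the recursion
$$\Delta_h=\Sp{F_{l, h}-F_{\M, h}}Q^l_{Z, h+1}\Sp{x_h^l, a_h^l}+\E_{\M}\Mp{\Delta_{h+1}\mid x_h^l, a_h^l}.$$

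Finally, I would take total expectation, apply the tower property, and telescope from $h=0$ to $h=H-1$ using the terminal condition $\Delta_H=0$, which holds because $Q^l_{Z, H}=Q^{\pi^l}_{\M, H}=\ve{0}$. This gives $\E\Mp{\Delta_0}=\E\Mp{\sum_{h=0}^{H-1}\Sp{F_{l, h}-F_{\M, h}}Q^l_{Z, h+1}\Sp{x_h^l, a_h^l}}$, and combining with the reduction of the first paragraph yields (\ref{equ:raw_bound}). I expect the \textbf{main obstacle} to be the bookkeeping of the several independent sources of randomness — the MDP transitions, the episode start $x_0^l$, and the fresh Gaussian index $Z_{h, x, a}$ injected by $F_{l, h}$ at each stage — so that the conditional expectations in the recursion are well defined and the tower property legitimately collapses the telescope. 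In particular one must verify that the alignment $\max_{a'}Q^l_{Z, h+1}\Sp{x_{h+1}, a'}=Q^l_{Z, h+1}\Sp{x_{h+1}, a_{h+1}^l}$ used to close the recursion is precisely the greedy choice made by $\pi^l$, and that the injected-noise part is retained inside the operator difference rather than zeroed, since its control is deferred to the concentration analysis of Theorem \ref{theo:regret}.
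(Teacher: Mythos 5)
Your proposal is correct and is exactly the standard argument this lemma rests on: optimism is invoked once at the root to replace $\E\Mp{V^*_{\M,0}\Sp{x_0^l}}$ by $\E\Mp{Q^l_{Z,0}\Sp{x_0^l, a_0^l}}$ (using greediness to align the max with the action taken), followed by adding and subtracting $F_{\M,h}Q^l_{Z,h+1}$ at each stage and telescoping the resulting recursion $\E\Mp{\Delta_h}=\E\Mp{\Sp{F_{l,h}-F_{\M,h}}Q^l_{Z,h+1}\Sp{x_h^l,a_h^l}}+\E\Mp{\Delta_{h+1}}$ down to $\Delta_H=0$ via the tower property. Note that the paper does not prove this lemma itself but imports it from Osband et al.\ (2017); your derivation reconstructs that reference's proof, including the two points you flag as obstacles (the measurability bookkeeping so the conditional expectation is only over the transition, and keeping the Gaussian perturbation inside the operator difference for later control in Theorem~\ref{theo:regret}).
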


In order to use bound (\ref{equ:raw_bound}), it is necessary to verify that our Algorithm 1 satisfies condition (\ref{equ:optim}). To achieve this, we can resort to the concept of \textit{stochastic optimism}, which is defined as the following:
\subsection{Stochastic Optimism}
\begin{definition}
	A random variable $X$ is \textit{stochastically optimistic} with respect to another random variable $Y$, denoted as $X\geq_{SO}Y$, if $\E\Mp{u\Sp{X}}\geq\E\Mp{u\Sp{Y}}$ holds for all convex increasing function $u:\R\mapsto\R$.
\end{definition}

The following two lemmas, which have been proved in \cite{osband2017deep}, will be useful in showing that the Algorithm 1 satisfies condition (\ref{equ:optim}).

\begin{lemma}
	\label{lemma:conv}
	Suppose $\Sp{X_1, \dots, X_n}$ and $\Sp{Y_1, \dots, Y_n}$ are two collections of independent random variables with $X_i\geq_{SO}Y_i$ for each $i\in\Bp{1, \dots, n}$. Then, for any convex increasing function $f:\R^n\mapsto\R$, we have
	$$f\Sp{X_1, \dots, X_n}\geq_{SO} f\Sp{Y_1, \dots, Y_n}$$
\end{lemma}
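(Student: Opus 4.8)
The plan is to reduce this multivariate statement to the one-dimensional definition of stochastic optimism via two ingredients: a composition fact, and a hybrid (one-coordinate-at-a-time) swapping argument that exploits independence.

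First I would observe that it suffices to prove the following scalar-valued claim: \emph{if $g:\R^n\mapsto\R$ is convex and nondecreasing in each coordinate, then $\E\Mp{g\Sp{X_1,\dots,X_n}}\geq\E\Mp{g\Sp{Y_1,\dots,Y_n}}$.} Indeed, for any convex increasing $u:\R\mapsto\R$, the composition $g:=u\circ f$ is again convex and coordinatewise nondecreasing. Monotonicity is immediate since $u$ and $f$ are both nondecreasing, and convexity follows because, for $\lambda\in\Mp{0,1}$, $u\Sp{f\Sp{\lambda a+\Sp{1-\lambda}b}}\leq u\Sp{\lambda f\Sp{a}+\Sp{1-\lambda}f\Sp{b}}\leq\lambda u\Sp{f\Sp{a}}+\Sp{1-\lambda}u\Sp{f\Sp{b}}$, where the first inequality uses convexity of $f$ together with monotonicity of $u$ and the second uses convexity of $u$. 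Applying the claim to this $g$ yields $\E\Mp{u\Sp{f\Sp{X_1,\dots,X_n}}}\geq\E\Mp{u\Sp{f\Sp{Y_1,\dots,Y_n}}}$ for every convex increasing $u$, which is exactly the assertion $f\Sp{X_1,\dots,X_n}\geq_{SO}f\Sp{Y_1,\dots,Y_n}$.

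To prove the claim I would run a telescoping hybrid argument. Since stochastic optimism depends only on marginal laws and the coordinates within each collection are independent, I may realize all $2n$ variables as mutually independent on a common probability space without changing either side. Define the hybrids $H_k=\Sp{X_1,\dots,X_k,Y_{k+1},\dots,Y_n}$, so that $H_0=\Sp{Y_1,\dots,Y_n}$ and $H_n=\Sp{X_1,\dots,X_n}$, and write $\E\Mp{g\Sp{H_n}}-\E\Mp{g\Sp{H_0}}=\sum_{k=1}^{n}\Sp{\E\Mp{g\Sp{H_k}}-\E\Mp{g\Sp{H_{k-1}}}}$. For fixed $k$, condition on $W=\Sp{X_1,\dots,X_{k-1},Y_{k+1},\dots,Y_n}$; on the event $W=w$ the map sending a real $t$ to $g$ evaluated with $t$ inserted in the $k$-th slot is a one-dimensional restriction of $g$, hence a convex increasing function of $t$. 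Because $X_k$ and $Y_k$ are independent of $W$ and $X_k\geq_{SO}Y_k$, the defining inequality of stochastic optimism applied to this restriction gives $\E\Mp{g\Sp{H_k}\mid W}\geq\E\Mp{g\Sp{H_{k-1}}\mid W}$ almost surely; taking expectations over $W$ makes each summand nonnegative, and summing over $k$ establishes the claim.

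The main obstacle is the single-coordinate swap: I must ensure that after conditioning on the remaining coordinates the residual function is genuinely a convex increasing function of \emph{one} real variable, so that the scalar definition of $\geq_{SO}$ applies, and that independence lets me treat $X_k,Y_k$ as independent of the conditioning vector $W$. Convexity of a one-coordinate restriction is automatic for any convex $g$, and coordinatewise monotonicity transfers directly to the restriction; the independence hypothesis is precisely what guarantees that conditioning on $W$ does not distort the laws of $X_k$ and $Y_k$. I expect the only delicate bookkeeping to be verifying that $u\circ f$ remains in the class of convex increasing functions and that the hybrid telescope is set up with the correct endpoints $H_0$ and $H_n$.
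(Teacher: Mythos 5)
Your proof is correct. Note that the paper itself contains no proof of this lemma---it is stated as a result established in the cited reference \cite{osband2017deep}---and your argument (reduce to a convex, coordinatewise nondecreasing $g = u \circ f$, then swap $Y_k$ for $X_k$ one coordinate at a time, conditioning on the remaining coordinates and using mutual independence so that the conditional laws of $X_k$ and $Y_k$ are just their marginals) is essentially the standard telescoping/conditioning proof given in that reference.
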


\begin{lemma}
	\label{lemma:dir_so}
	Let $Y=P^TV$ for fixed $V\in\R^n$ and $P\sim\mathrm{Dirichilet}\Sp{\bm{\alpha}}$ with $\bm{\alpha}\in\R^n_{+}$ and $\bm{\alpha}^T\bm{1}\geq 3$. Let $X\sim\mathcal{N}\Sp{\mu, \sigma^2}$ with $\mu\geq\frac{\bm{\alpha}^TV}{\bm{\alpha}^T\bm{1}}$ and $\sigma^2\geq\frac{3\mathrm{Span}\Sp{V}^2}{\bm{\alpha}^T\bm{1}}$, then $X\geq_{SO} Y$, where $\mathrm{Span}\Sp{V}=\max_iV_i-\min_iV_i$.
\end{lemma}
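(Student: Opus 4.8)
The plan is to prove the statement by passing through the characterization of $\geq_{SO}$ as an increasing convex order, reducing the general weight vector $V$ to an essentially two-valued (Beta) case, and then carrying out a direct tail comparison in which the variance inflation factor $3$ and the hypothesis $\bm{\alpha}^T\bm{1}\geq 3$ are consumed.

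First I would record the elementary invariance and monotonicity properties of $\geq_{SO}$. Since every convex increasing $u$ stays convex increasing under an increasing affine change of argument, $\geq_{SO}$ is preserved if we replace $V$ by $aV+b\bm{1}$ and simultaneously replace $X$ by $aX+b$ for $a>0$; hence after shifting and scaling I may assume $\min_i V_i=0$ and $\mathrm{Span}\Sp{V}=1$, i.e. $V\in[0,1]^n$. Moreover $X\geq_{SO}Y$ is monotone in the Gaussian parameters: increasing $\mu$ gives $X+c\geq_{SO}X$ for $c\geq 0$, and increasing $\sigma^2$ amounts to convolving with an independent mean-zero Gaussian, a mean-preserving spread that only raises $\E\Mp{u(X)}$ for convex $u$. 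It therefore suffices to treat the extremal case $\mu=\frac{\bm{\alpha}^TV}{\bm{\alpha}^T\bm{1}}$ and $\sigma^2=\frac{3}{\bm{\alpha}^T\bm{1}}$, so that $X$ and $Y$ share the same mean. With matched means the upper and lower stop-loss transforms of $Y$ differ by the constant $\mu-t$, so $X\geq_{SO}Y$ is equivalent to the full convex-order statement $\E\Mp{(Y-t)_+}\leq\E\Mp{(X-t)_+}$ for every $t\in\R$, which is the family of inequalities I will verify.

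Second, I would cut down the dimension of the weight vector. For fixed $P$ the map $V\mapsto u\Sp{P^TV}$ is convex, so $g(V):=\E\Mp{u\Sp{P^TV}}$ is convex on the polytope $\Bp{V\in[0,1]^n:\bm{\alpha}^TV=\text{const}}$ and attains its maximum at a vertex, which has all coordinates in $\Bp{0,1}$ except at most one fractional coordinate. Using the aggregation property of the Dirichlet distribution (sums of disjoint blocks of coordinates are again Dirichlet), such a vertex collapses $Y=P^TV$ to a weighted sum of at most three aggregated Dirichlet coordinates, and in the genuinely two-valued case to $Y=W$ with $W\sim\mathrm{Beta}\Sp{a,\bm{\alpha}^T\bm{1}-a}$ and $\E\Mp{W}=\mu$. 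This isolates the core comparison: a Gaussian with variance $3/(\bm{\alpha}^T\bm{1})$ against a Beta (or low-dimensional Dirichlet aggregate) whose variance scales like $1/(\bm{\alpha}^T\bm{1})$.

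Finally, the core comparison is the analytic heart: I must show $\E\Mp{(Y-t)_+}\leq\E\Mp{(X-t)_+}$ for all $t$. For $t$ below the common mean $\mu$ this is easy, because after the equal-means identity $\E\Mp{(Y-t)_+}=\Sp{\mu-t}+\E\Mp{(t-Y)_+}$ it becomes a lower-tail statement, and the Gaussian's unbounded left tail makes $\E\Mp{(t-X)_+}$ dominate the bounded quantity $\E\Mp{(t-Y)_+}$. The difficulty sits in the upper-tail region $t\geq\mu$: there $Y$ is capped at $1$, and I need the inflated Gaussian upper tail to dominate the Dirichlet stop-loss integral $\int_t^\infty\Prob\Sp{Y>r}\,dr$. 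I would control the latter through a sub-Gaussian tail bound for a Dirichlet-weighted sum of values in $[0,1]$, whose variance proxy is of order $1/(\bm{\alpha}^T\bm{1})$, and compare it against the Gaussian tail, whose variance carries the extra factor $3$; the hypothesis $\bm{\alpha}^T\bm{1}\geq 3$ is exactly what validates the Beta tail bound with this constant. The main obstacle is this pointwise tail domination for every $t\geq\mu$: matching exponential moments alone does not imply the convex order, so I expect to need explicit estimates for the Beta distribution function (and its low-dimensional Dirichlet generalization) to show that the capped Dirichlet tail sits under the inflated Gaussian tail uniformly in $t$.
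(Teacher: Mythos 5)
A point of orientation first: the paper itself offers no proof of this lemma --- it is imported verbatim from \cite{osband2017deep}, where the argument reduces $Y=P^TV$ to a Gaussian-versus-Beta comparison via Dirichlet aggregation and then settles that two-parameter case by explicit, carefully quantified estimates. Your opening reductions are sound and genuinely different from that route: the affine invariance of $\geq_{SO}$, the monotonicity in $\mu$, the mean-preserving-spread argument in $\sigma^2$, and the stop-loss characterization $\E\Mp{\Sp{X-t}_+}\geq\E\Mp{\Sp{Y-t}_+}$ are all correct; and maximizing the convex functional $V\mapsto\E\Mp{u\Sp{P^TV}}$ over the polytope $\Bp{V\in[0,1]^n:\bm{\alpha}^TV=\mathrm{const}}$ to land at a vertex with at most one fractional coordinate, then aggregating the Dirichlet, is an elegant alternative to the inductive aggregation of the cited source.

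The genuine gap is the core comparison, and as written it would fail, not merely remain unfinished. A sub-Gaussian bound with variance proxy merely \emph{of order} $1/\bm{\alpha}^T\bm{1}$ cannot close the argument: writing $n=\bm{\alpha}^T\bm{1}$, the matched-mean checkpoint at $t=\mu$ requires $\E\Mp{\Sp{Y-\mu}_+}\leq\E\Mp{\Sp{X-\mu}_+}=\sigma/\sqrt{2\pi}\approx 0.69/\sqrt{n}$, whereas a proxy $c/n$ only yields $\E\Mp{\Sp{Y-\mu}_+}\leq\sqrt{\pi c/2}/\sqrt{n}$, forcing $c\leq 0.3$ --- essentially the \emph{optimal} proxy $1/\Sp{4\Sp{a+b+1}}$ for the Beta, which is itself a nontrivial theorem (Marchal and Arbel), and for which no off-the-shelf analogue covers your vertex case $Y=W_1+\theta W_2$ with $\Sp{W_1,W_2}$ two coordinates of a three-dimensional Dirichlet; note you cannot push $\theta$ to $\Bp{0,1}$ by a further convexity step, because that shifts $\E\Mp{Y}$ and breaks the hypothesis $\mu\geq\bm{\alpha}^TV/\bm{\alpha}^T\bm{1}$. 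Moreover, your stated goal of \emph{pointwise} tail domination via such a bound is impossible near the mean --- the sub-Gaussian bound equals $1$ at $t=\mu$ while the Gaussian survival is $1/2$ --- so only the integrated (stop-loss) comparison can work, which is exactly where the sharp constant bites; your claim that the region $t\leq\mu$ is ``easy'' because of the Gaussian's unbounded left tail is likewise not a proof, since near the mean the lower stop-loss comparison is precisely as delicate (it does go through with the sharp proxy, by the symmetric computation). Finally, the hypothesis $\bm{\alpha}^T\bm{1}\geq 3$ never enters your outline concretely, while in the cited proof it is consumed by the explicit Beta estimates; any completion of your plan must identify where it, and the specific factor $3$ in $\sigma^2$, are actually used.
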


Let $\mathcal{H}_{l-1}=\bigcup_{k=1}^{l-1}\Bp{\Sp{h, x_h^k, a_h^k, r_{h+1}^k}\mid h<H}$ be the history up to the end of episode $l-1$. Then, by directly combining Lemma \ref{lemma:conv} and Equation \ref{equ:operator}, we can have the following result.
\begin{lemma}
	\label{lemma:mono}
	For two random functions $Q_1, Q_2\in\R^{\abs{\mathcal{X}}\abs{\mathcal{A}}}$, suppose that condition on history $\mathcal{H}_{l-1}$, the entries of $Q_i\Sp{x, a}$ are drawn independently across $x, a$ and independently of $z$ in Equation \ref{equ:operator}. Then, if
	$$Q_1\Sp{x, a}\mid\mathcal{H}_{l-1}\geq_{SO}Q_2\Sp{x, a}\mid\mathcal{H}_{l-1},\qquad\forall\Sp{x, a}\in\mathcal{X}\times\mathcal{A},$$
	we can have
	$$F_{l, h}Q_1\Sp{x, a}\mid\mathcal{H}_{l-1}\geq_{SO}F_{l, h}Q_2\Sp{x, a}\mid\mathcal{H}_{l-1},\qquad\forall\Sp{x, a}\in\mathcal{X}\times\mathcal{A}, t\in\Bp{0, \dots, H-1}, l\in\mathbb{N}$$
\end{lemma}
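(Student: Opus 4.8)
The plan is to observe that, for each fixed $\Sp{x, a}$, the random quantity $F_{l, h}Q\Sp{x, a}$ from Equation \ref{equ:operator} is a \emph{deterministic} function of the independent random inputs $\Bp{Q\Sp{x', a'}}_{\Sp{x', a'}\in\mathcal{X}\times\mathcal{A}}$ together with the fresh noise $Z_{h, x, a}$, and that this function is convex and coordinatewise nondecreasing. The conclusion then follows immediately from Lemma \ref{lemma:conv}. Concretely, I would write $F_{l, h}Q\Sp{x, a}=f\Sp{\Bp{Q\Sp{x', a'}}_{x', a'},\, Z_{h, x, a}}$ with
\begin{equation*}
f\Sp{\Bp{q_{x', a'}},\, z}=\frac{\sum_{\Sp{r, x'}\in\mathcal{D}^{l-1}_{h, x, a}}\Sp{r+\max_{a'\in\mathcal{A}}q_{x', a'}}+\beta\bar\theta}{\nls+\beta}+\frac{\sqrt{\nls}\sigma+\beta\sigma_0}{\nls+\beta}\,z,
\end{equation*}
where $f$ depends only on those $q_{x', a'}$ whose outcome $x'$ appears in $\mathcal{D}^{l-1}_{h, x, a}$ and is constant in the remaining coordinates.

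First I would verify the two structural properties of $f$. For monotonicity, note that $\max_{a'\in\mathcal{A}}q_{x', a'}$ is nondecreasing in each $q_{x', a'}$, the reward terms are constants, and both coefficients $\frac{1}{\nls+\beta}$ and $\frac{\sqrt{\nls}\sigma+\beta\sigma_0}{\nls+\beta}$ are nonnegative because $\nls\geq 0$ and $\sigma, \sigma_0, \beta>0$; hence $f$ is nondecreasing in every argument, including $z$. For convexity, each $\max_{a'\in\mathcal{A}}q_{x', a'}$ is a maximum of coordinate projections and is therefore convex, a nonnegative-weighted sum of convex functions is convex, and the term in $z$ is affine, so $f$ is jointly convex. (When $\nls=0$ the dataset is empty and $f$ reduces to the affine map $\bar\theta+\sigma_0 z$, which is trivially convex and nondecreasing, so the claim holds with equality.) Next I would assemble the inputs for Lemma \ref{lemma:conv}: by hypothesis, conditional on $\mathcal{H}_{l-1}$ the entries $\Bp{Q_i\Sp{x', a'}}_{x', a'}$ are mutually independent and independent of $Z_{h, x, a}$, so each augmented collection $\Bp{Q_i\Sp{x', a'}}_{x', a'}\cup\Bp{Z_{h, x, a}}$, $i\in\Bp{1, 2}$, is a family of independent random variables, and the required termwise ordering holds: $Q_1\Sp{x', a'}\mid\mathcal{H}_{l-1}\geq_{SO}Q_2\Sp{x', a'}\mid\mathcal{H}_{l-1}$ is the assumption, while the shared noise satisfies $Z_{h, x, a}\geq_{SO}Z_{h, x, a}$ trivially, as it carries the same law $\mathcal{N}\Sp{0, 1}$ in both operators.

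Applying Lemma \ref{lemma:conv} to the convex increasing $f$ then gives $F_{l, h}Q_1\Sp{x, a}\mid\mathcal{H}_{l-1}\geq_{SO}F_{l, h}Q_2\Sp{x, a}\mid\mathcal{H}_{l-1}$, and since $\Sp{x, a}$, $h$ and $l$ are arbitrary this is exactly the statement. The only point demanding care -- and the nearest thing to an obstacle -- is checking that $f$ is \emph{simultaneously} convex and coordinatewise nondecreasing; this hinges entirely on the coefficients being nonnegative and on the $\max$ operation preserving both properties, after which the result is a direct consequence of Lemma \ref{lemma:conv}.
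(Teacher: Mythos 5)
Your proof is correct and takes essentially the same route as the paper, whose entire argument is the one-line assertion that the lemma follows ``by directly combining Lemma \ref{lemma:conv} and Equation \ref{equ:operator}.'' You have simply filled in the details that the paper leaves implicit---writing $F_{l,h}Q\Sp{x,a}$ as a jointly convex, coordinatewise nondecreasing function of the conditionally independent inputs $\Bp{Q\Sp{x',a'}}_{x',a'}$ and $Z_{h,x,a}$ (with the data and counts fixed given $\mathcal{H}_{l-1}$), checking the termwise $\geq_{SO}$ ordering including the trivial $Z_{h,x,a}\geq_{SO}Z_{h,x,a}$, and handling the $n^l\Sp{h,x,a}=0$ edge case---and all of these verifications are sound.
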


For convenience, we will first introduce several new notations. 
\begin{definition}
    For any arbitrary state-action value function $Q$, the \textit{induced value function} $V_Q\in\R^{2\aX}$ is defined as $V_Q\Sp{r, x'}:=r+\max_{a'\in\mathcal{A}}Q\Sp{x', a'}$ for all $x'\in\mathcal{X}$ and $r\in\Bp{0, 1}$
\end{definition}
With this definition, we notice that
\begin{equation}
\label{equ:FM}
    F_{\M, h}Q\Sp{x, a}=\E\Mp{r_{h+1}+\max_{a'\in\mathcal{A}}Q\Sp{x_{h+1}, a'}\mid\M, x_{h}=x, a_{h}=a}=V_Q^T\mathcal{P}_{h, x, a},
\end{equation}
where $F_{\M, h}$ is the true Bellman operator so that $Q^*_{\M, h}=F_{\M, h}Q^*_{\M, h+1}$.

Further, Let $\hatp^l_{h, x, a}\in\R^{2\abs{\mathcal{X}}}$ be the empirical distribution over outcomes $\Sp{r, x'}\in\mathcal{D}^{l-1}_{h, x, a}$. With this definition, we can have 
$$\sum_{\Sp{r, x'}\in\mathcal{D}^{l-1}_{h, x, a}}\Sp{r+\max_{a'\in\mathcal{A}}Q\Sp{x', a'}}=n^l\Sp{h, x, a}V_Q^T\hatp^l_{h, x, a}$$

Therefore, the operator $F_{l, h}$ can now be rewritten as
\begin{equation}
\label{equ:new_f_op}
    F_{l, h}Q\Sp{x, a}=\frac{n^l\Sp{h, x, a}V_Q^T\hatp^l_{h, x, a}+\beta\bar{\theta}}{n^l\Sp{h, x, a}+\beta}+\frac{\sqrt{n^l\Sp{h, x, a}}\sigma+\beta\sigma_0}{n^l\Sp{h, x, a}+\beta}\cdot Z_{h, x, a}
\end{equation}

Further, condition on any history $\mathcal{H}_{l-1}$, a standard result about Dirichlet distribution tells us that the posterior of $\mathcal{P}_{h, x, a}$ will be $\mathcal{P}_{h, x, a}|\mathcal{H}_{l-1}\sim\text{Dirichlet}\Sp{\bm{\alpha}^l_{h, x, a}}$, where 
$$\bm{\alpha}^l_{h, x, a}=\bm{\alpha}^0_{h, x, a}+n^l\Sp{h, x, a}\hatp^l_{h, x, a}$$

With the help of these notations, we can conveniently prove the following lemma. 

\begin{lemma}
	\label{lemma:sto_seq}
	Suppose Assumption \ref{assume:dir} holds and Algorithm 1 with parameters $\Sp{\bar{\theta}, \sigma, \sigma_0}$ and $\beta=\frac{\sigma^2}{\sigma_0^2}$ is applied. Then, for any fixed $Q\in\R^{\abs{\mathcal{X}}\abs{\mathcal{A}}}$, episode $l$ with history $\mathcal{H}_{l-1}$ and timestep $h\in\Bp{0, \dots, H-1}$ such that $\sigma^2\geq 3\cdot\mathrm{Span}\Sp{V_Q}^2$ and $\bar{\theta}\geq\Norm{V_Q}_{\infty}$, we can have
	$$F_{l, h}Q\Sp{x, a}\mid\mathcal{H}_{l-1}\geq_{SO}F_{\M, h}Q\Sp{x, a}\mid\mathcal{H}_{l-1},\qquad\forall\Sp{x, a}\in\mathcal{X}\times\mathcal{A}$$
\end{lemma}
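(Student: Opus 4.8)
The plan is to recognize both sides of the claimed relation as a Gaussian and a Dirichlet-weighted average respectively, and then invoke Lemma~\ref{lemma:dir_so} directly. Conditioned on $\mathcal{H}_{l-1}$, the operator in Equation~\eqref{equ:new_f_op} is Gaussian,
$$F_{l, h}Q\Sp{x, a}\mid\mathcal{H}_{l-1}\sim\mathcal{N}\Sp{\mu,\ s^2},\qquad \mu=\frac{\nls V_Q^T\hatp^l_{h, x, a}+\beta\bar{\theta}}{\nls+\beta},\qquad s=\frac{\sqrt{\nls}\sigma+\beta\sigma_0}{\nls+\beta},$$
while by Equation~\eqref{equ:FM} the true operator gives $F_{\M, h}Q\Sp{x, a}\mid\mathcal{H}_{l-1}=V_Q^T P$ with $P\sim\mathrm{Dirichilet}\Sp{\bm{\alpha}^l_{h, x, a}}$ and $\bm{\alpha}^l_{h, x, a}=\bm{\alpha}^0_{h, x, a}+\nls\hatp^l_{h, x, a}$. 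I would then apply Lemma~\ref{lemma:dir_so} with $V=V_Q$, $\bm{\alpha}=\bm{\alpha}^l_{h, x, a}$, $X=F_{l, h}Q\Sp{x, a}$, and $Y=F_{\M, h}Q\Sp{x, a}$; the conclusion $X\geq_{SO}Y$ is precisely what we want, so it remains only to verify the three hypotheses of that lemma.

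For the total-count hypothesis, since $\hatp^l_{h, x, a}$ is a probability vector and Assumption~\ref{assume:dir} gives $\bm{1}^T\bm{\alpha}^0_{h, x, a}=\beta$, we get $\bm{1}^T\bm{\alpha}^l_{h, x, a}=\beta+\nls\geq\beta\geq 3$. For the mean hypothesis, because both $\mu$ and $V_Q^T\bm{\alpha}^l_{h, x, a}/\Sp{\beta+\nls}$ share the denominator $\beta+\nls$, the required inequality collapses (the common term $\nls V_Q^T\hatp^l_{h, x, a}$ cancels) to $\beta\bar{\theta}\geq V_Q^T\bm{\alpha}^0_{h, x, a}$. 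Since $\bm{\alpha}^0_{h, x, a}$ has nonnegative entries summing to $\beta$, we bound $V_Q^T\bm{\alpha}^0_{h, x, a}\leq\beta\Norm{V_Q}_{\infty}\leq\beta\bar{\theta}$ using the hypothesis $\bar{\theta}\geq\Norm{V_Q}_{\infty}$, so the mean condition holds.

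The main technical step is the variance hypothesis $s^2\geq 3\,\mathrm{Span}\Sp{V_Q}^2/\Sp{\beta+\nls}$. Since $\sigma^2\geq 3\,\mathrm{Span}\Sp{V_Q}^2$, it suffices to establish the cleaner bound $s^2\geq\sigma^2/\Sp{\beta+\nls}$, that is, $\Sp{\sqrt{\nls}\sigma+\beta\sigma_0}^2\geq\sigma^2\Sp{\nls+\beta}$. Expanding the square and cancelling the common $\nls\sigma^2$ reduces this to $2\sqrt{\nls}\,\sigma\beta\sigma_0+\beta^2\sigma_0^2\geq\beta\sigma^2$. Here the crucial ingredient is the parameter choice $\beta=\sigma^2/\sigma_0^2$, equivalently $\beta\sigma_0^2=\sigma^2$: it forces $\beta^2\sigma_0^2=\beta\sigma^2$, so the inequality becomes $2\sqrt{\nls}\,\sigma\beta\sigma_0\geq 0$, which is trivially true.

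I expect this cancellation to be the crux of the argument, since it is exactly where the otherwise mysterious coupling $\beta=\sigma^2/\sigma_0^2$ pays off. Note also that the same computation transparently covers the boundary case $\nls=0$, where the operator degenerates to the prior $\mathcal{N}\Sp{\bar{\theta},\sigma_0^2}$ and all three conditions still hold (the variance condition reducing to $\sigma_0^2\geq 3\,\mathrm{Span}\Sp{V_Q}^2/\beta$, which follows from $\sigma^2\geq 3\,\mathrm{Span}\Sp{V_Q}^2$). With the three hypotheses verified, Lemma~\ref{lemma:dir_so} yields $F_{l, h}Q\Sp{x, a}\mid\mathcal{H}_{l-1}\geq_{SO}F_{\M, h}Q\Sp{x, a}\mid\mathcal{H}_{l-1}$ for all $\Sp{x, a}\in\mathcal{X}\times\mathcal{A}$, completing the proof.
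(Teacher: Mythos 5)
Your proposal is correct and takes essentially the same route as the paper's own proof: identify $F_{l,h}Q\Sp{x,a}\mid\mathcal{H}_{l-1}$ as Gaussian and $F_{\M,h}Q\Sp{x,a}\mid\mathcal{H}_{l-1}$ as a Dirichlet-weighted average, then verify the mean and variance hypotheses of Lemma~\ref{lemma:dir_so}, with the mean condition following from $\bar{\theta}\geq\Norm{V_Q}_{\infty}$ and the variance condition hinging on the same cancellation $\beta^2\sigma_0^2=\beta\sigma^2$ forced by $\beta=\sigma^2/\sigma_0^2$. The only differences are cosmetic: you organize the variance step as an equivalent reduced inequality rather than the paper's chain of identities, and you add an explicit (and harmless) check of the $\nls=0$ boundary case and of the total-count hypothesis $\bm{1}^T\bm{\alpha}^l_{h,x,a}\geq 3$, which the paper leaves implicit.
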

\begin{proof}
	Recall that $F_{\M, h}Q\Sp{x, a}=V_Q^T\mathcal{P}_{h, x, a}$. Further, we have $\mathcal{P}_{h, x, a}\mid\mathcal{H}_{l-1}\sim\text{Dirichlet}\Sp{\bm{\alpha}^l_{h, x, a}}$ with $\bm{\alpha}^l_{h, x, a}=\bm{\alpha}^0_{h, x, a}+n^l\Sp{h, x, a}\hatp^l_{h, x, a}$, which implies that $\bm{1}^T\bm{\alpha}^l_{h, x, a}=\beta+n^l\Sp{h, x, a}$ by Assumption \ref{assume:dir}.\\
	Meanwhile, we can also have $F_{l, h}Q\Sp{x, a}\mid\mathcal{H}_{l-1}\sim\mathcal{N}\Sp{\tilde{\mu}, \tilde{\sigma}^2}$, where
	$$\tilde{\mu}=\frac{n^l\Sp{h, x, a}V_Q^T\hatp^l_{h, x, a}+\beta\bar{\theta}}{n^l\Sp{h, x, a}+\beta},\quad, \tilde{\sigma}^2=\Sp{\frac{\sqrt{n^l\Sp{h, x, a}}\sigma+\beta\sigma_0}{n^l\Sp{x, h, a}+\beta}}^2$$
	By Lemma \ref{lemma:dir_so}, we only need to show that $\tilde{\mu}\geq\frac{V_Q^T\bm{\alpha}^l_{h, x, a}}{\bm{1}^T\bm{\alpha}^l_{h, x, a}}$ and $\tilde{\sigma}^2\geq\frac{3\cdot\text{Span}\Sp{V_Q}^2}{\bm{1}^T\bm{\alpha}^l_{h, x, a}}$.\\
	To show the former one, we have
	\begin{align*}
	    \frac{V_Q^T\bm{\alpha}^l_{h, x, a}}{\bm{1}^T\bm{\alpha}^l_{h, x, a}}&=\frac{n^l\Sp{h, x, a}V_Q^T\hatp^l_{h, x, a}+V_Q^T\bm{\alpha}^0_{h, x, a}}{n^l\Sp{h, x, a}+\beta}\\
	    &\leq \frac{n^l\Sp{h, x, a}V_Q^T\hatp^l_{h, x, a}+\Norm{V_Q}_{\infty}\bm{1}^T\bm{\alpha}^0_{h, x, a}}{n^l\Sp{h, x, a}+\beta}\\
	    &\leq\tilde{\mu}\tag{Since we assumed that $\bar{\theta}\geq\Norm{V_Q}_{\infty}$ and $\beta=\bm{1}^T\bm{\alpha}^0_{h, x, a}$}
	\end{align*}
	To show the later one, we notice that
	\begin{align*}
	\Sp{\frac{\sqrt{\nls}\sigma+\beta\sigma_0}{\nls+\beta}}^2&=\frac{\nls\sigma^2+\beta^2\sigma_0^2+2\sqrt{\nls}\beta\sigma\sigma_0}{\Sp{\nls+\beta}^2}\\
	&=\frac{\nls\sigma^2+\beta\sigma^2+2\sqrt{\nls}\beta\sigma\sigma_0}{\Sp{\nls+\beta}^2}\tag{Since $\beta=\frac{\sigma^2}{\sigma_0^2}$}\\
	&=\frac{\sigma^2}{\nls+\beta}+\frac{2\sqrt{\nls}\beta\sigma\sigma_0}{\Sp{\nls+\beta}^2}\\
	&\geq\frac{\sigma^2}{\nls+\beta}\geq\frac{3\cdot\text{Span}\Sp{V_Q}^2}{\nls+\beta}\tag{Since $\sigma^2\geq 3\cdot\text{Span}\Sp{V_Q}^2$}\\
	&= \frac{3\cdot\text{Span}\Sp{V_Q}^2}{\bm{1}^T\bm{\alpha}^l_{h, x, a}}=\tilde{\sigma}^2
	\end{align*}
	Therefore, the proof is complete.
\end{proof}

Notice that in our MDP, by Assumption \ref{assume:mdp}, the planning horizon is $H$ and reward can only take 0 or 1. Thus, we must have $\text{Span}\Sp{V_Q}\leq H$ and $\Norm{V_Q}_{\infty}\leq H$. Then, by combining Lemma \ref{lemma:mono} and Lemma \ref{lemma:sto_seq}, we can have the following result, which has been proved in \cite{osband2017deep}.
\begin{corollary}
	If Assumption \ref{assume:dir} holds and Algorithm 1 is applied with parameters $\sigma^2=3H^2$, $\bar{\theta}=H$ and $\beta=\frac{\sigma^2}{\sigma_0^2}$, we can then have
	$$Q^l_{Z, 0}\Sp{x, a}\mid\mathcal{H}_{l-1}\geq_{SO}Q^*_{\M, 0}\Sp{x, a}\mid\mathcal{H}_{l-1}$$
	for any history $\mathcal{H}_{l-1}$ and pair $\Sp{x, a}\in\mathcal{X}\times\mathcal{A}$.
\end{corollary}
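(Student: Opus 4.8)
The plan is to prove the claim by backward induction on the stage index $h$, establishing the stronger per-stage statement that $Q^l_{Z, h}\Sp{x, a}\mid\mathcal{H}_{l-1}\geq_{SO}Q^*_{\M, h}\Sp{x, a}\mid\mathcal{H}_{l-1}$ for every $\Sp{x, a}$ and every $h\in\Bp{0, \dots, H}$; the corollary is then the case $h=0$. The base case $h=H$ is immediate because $Q^l_{Z, H}=Q^*_{\M, H}=\ve{0}$, so the two sides are identical and trivially stochastically optimistic. For the inductive step I assume the statement at $h+1$ and use the two recursions $Q^l_{Z, h}=F_{l, h}Q^l_{Z', h+1}$ and $Q^*_{\M, h}=F_{\M, h}Q^*_{\M, h+1}$ with $Z, Z'$ independent. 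The goal is to insert the intermediate object $F_{l, h}Q^*_{\M, h+1}$ and chain $F_{l, h}Q^l_{Z', h+1}\geq_{SO}F_{l, h}Q^*_{\M, h+1}\geq_{SO}F_{\M, h}Q^*_{\M, h+1}=Q^*_{\M, h}$, concluding by transitivity of $\geq_{SO}$, which is immediate from its definition via expectations of convex increasing functions.

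The second inequality, $F_{l, h}Q^*_{\M, h+1}\geq_{SO}F_{\M, h}Q^*_{\M, h+1}$, is an application of Lemma \ref{lemma:sto_seq}, but since that lemma is stated for a \emph{fixed} value function while $Q^*_{\M, h+1}$ is random, I would first condition on the transition dynamics of all stages after $h$, i.e.\ on $\Bp{\mathcal{P}_{h', x, a}: h'\geq h+1}$, under which $Q^*_{\M, h+1}$ becomes a deterministic function $q^*$. By Assumption \ref{assume:dir} the Dirichlet priors are independent across $\Sp{h, x, a}$, so conditionally on this future and on $\mathcal{H}_{l-1}$ the posterior of $\mathcal{P}_{h, x, a}$ is unchanged and independent of $q^*$, and Lemma \ref{lemma:sto_seq} applies to the fixed $q^*$ once its hypotheses are checked. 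These hold for the chosen parameters: binary rewards and horizon $H$ force $\mathrm{Span}\Sp{V_{q^*}}\leq H$ and $\Norm{V_{q^*}}_{\infty}\leq H$, hence $\sigma^2=3H^2\geq 3\,\mathrm{Span}\Sp{V_{q^*}}^2$ and $\bar{\theta}=H\geq\Norm{V_{q^*}}_{\infty}$. Integrating the resulting conditional inequality back over the future dynamics preserves $\geq_{SO}$, because stochastic optimism is an inequality between conditional expectations of convex increasing functions and is therefore stable under mixtures, so this step goes through cleanly.

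The first inequality, $F_{l, h}Q^l_{Z', h+1}\geq_{SO}F_{l, h}Q^*_{\M, h+1}$, is where I would invoke the monotonicity Lemma \ref{lemma:mono} together with the induction hypothesis, and this is the step I expect to be the main obstacle. Lemma \ref{lemma:mono} requires that the entries of \emph{both} value functions be independent across $\Sp{x, a}$ given $\mathcal{H}_{l-1}$. For the learned function this is automatic, since $\nu^l$ and $m^l$ are fixed given the history and $Q^l_{Z', h+1}\Sp{x, a}=\nu^l+m^l Z'_{h+1, x, a}$ with independently drawn $Z'_{h+1, x, a}$. For $Q^*_{\M, h+1}$, however, the entries $V_{Q^*_{\M, h+2}}^T\mathcal{P}_{h+1, x, a}$ share the common random vector $V_{Q^*_{\M, h+2}}$ across all $\Sp{x, a}$, so the independence hypothesis fails outright; moreover per-entry stochastic optimism alone need not survive an arbitrary convex increasing map when the dominated coordinates are dependent, so this coupling cannot simply be ignored. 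The natural remedy is again to condition on the post-$\Sp{h+1}$ dynamics so that $V_{Q^*_{\M, h+2}}$ is frozen and the entries of $Q^*_{\M, h+1}$ become conditionally independent; the catch is that the induction hypothesis must then be supplied in its conditional form and one must verify that the ordering is not destroyed when the future randomness is reintroduced. Threading this stage-wise conditioning consistently through the induction is the genuinely technical part, and I would discharge it exactly as in the RLSVI analysis of \cite{osband2017deep}. Granting it, transitivity of $\geq_{SO}$ yields $Q^l_{Z, h}\Sp{x, a}\mid\mathcal{H}_{l-1}\geq_{SO}Q^*_{\M, h}\Sp{x, a}\mid\mathcal{H}_{l-1}$, and the case $h=0$ is the claimed corollary.
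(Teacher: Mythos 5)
Your overall skeleton matches the paper's: the paper assembles this corollary exactly by combining Lemma \ref{lemma:mono} with Lemma \ref{lemma:sto_seq} and defers the induction details to \cite{osband2017deep}, and your handling of the second inequality is correct and careful --- freezing the dynamics at stages after $h$, checking $\mathrm{Span}\Sp{V_{q^*}}\leq H$ and $\Norm{V_{q^*}}_{\infty}\leq H$ against $\sigma^2=3H^2$, $\bar{\theta}=H$, and using stability of $\geq_{SO}$ under mixtures to reintegrate.

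The gap is in the first inequality, which you rightly flagged as the obstacle, but your proposed repair does not close it. First, a factual slip: the entries of the \emph{learned} function are not conditionally independent given $\mathcal{H}_{l-1}$ either. Only $m^l$ is history-measurable; under the analysis recursion $Q^l_{Z',h+1}=F_{l,h+1}Q^l_{Z'',h+2}$, the quantity $\nu^l\Sp{h+1,x',a'}$ involves the noise indices at stages $\geq h+2$, so two entries of $Q^l_{Z',h+1}$ are coupled whenever their outcome sets share a successor state; the independence hypothesis of Lemma \ref{lemma:mono} thus fails on both sides, not only for $Q^*$. Second, and more fundamentally, the ``conditional form'' of the induction hypothesis you propose becomes unprovable one level down. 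After conditioning on the post-$\Sp{h+1}$ dynamics, $Q^*_{\M, h+2}$ is a frozen function $q^{**}$, and running your scheme at stage $h+1$ requires the entrywise comparison $Q^l_{Z'',h+2}\Sp{x,a}\mid\mathcal{H}_{l-1}\geq_{SO}q^{**}\Sp{x,a}$; since stochastic optimism over a constant $c$ is equivalent to $\E\Mp{X}\geq c$ (take $u$ to be the identity), this demands that the posterior mean of the learned value dominate \emph{every} realization of $Q^*_{\M, h+2}\Sp{x,a}$ --- false once enough data has been gathered, because $\nu^l$ then concentrates near the realized value while the Dirichlet posterior keeps full support on the simplex. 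The argument that actually closes this step never compares the learned function against a frozen true function at inner stages: one interpolates one operator at a time, defining $A_j=F_{l,0}\cdots F_{l,j-1}F_{\M,j}\cdots F_{\M,H-1}\ve{0}$ so that $A_H=Q^l_{Z,0}$ and $A_0=Q^*_{\M,0}$, and shows $A_{j+1}\Sp{x,a}\geq_{SO}A_j\Sp{x,a}$ given $\mathcal{H}_{l-1}$. Conditioning on the dynamics at stages $\geq j+1$ freezes $Q^*_{\M,j+1}=q^*$, so the entrywise comparison is between $F_{l,j}q^*$ and $F_{\M,j}q^*$, both built from the \emph{same} frozen $q^*$ --- exactly Lemma \ref{lemma:sto_seq} --- and both with conditionally independent entries (independent Gaussians, respectively independent posterior Dirichlets); the outer map $Q\mapsto F_{l,0}\cdots F_{l,j-1}Q\Sp{x,a}$ is a convex increasing function of those entries and of its own independent noises, so Lemma \ref{lemma:conv} pushes the comparison through, and transitivity over $j=H-1,\dots,0$ yields the corollary.
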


Then, by using Lemma \ref{lemma:conv}, which is applicable since both $Z_{h, x, a}$ and $\mathcal{P}_{h, x, a}$ are distributed independently across different states and actions, we can have
$$\E\Mp{\max_{a'\in\mathcal{A}}Q^l_{Z, 0}\Sp{x^l_0, a'}\mid\mathcal{H}_{l-1}}\geq\E\Mp{\max_{a'\in\mathcal{A}}Q^*_{\M, 0}\Sp{x^l_0, a'}\mid\mathcal{H}_{l-1}}$$
Therefore, condition (\ref{equ:optim}) can be obtained by simply taking expectation over all possible $\mathcal{H}_{l-1}$.

\subsection{Towards Regret Bound}
After verifying condition (\ref{equ:optim}), we can then apply bound (\ref{equ:raw_bound}) in Lemma \ref{lemma:sto_optim} to our Algorithm 1. For simplicity, let
\begin{equation}
\label{equ:simp}
\sigma^l\Sp{h, x, a}=\frac{\sqrt{n^l\Sp{h, x, a}}\sigma+\beta\sigma_0}{n^l\Sp{h, x, a}+\beta},\quad\text{and}\quad w^l\Sp{h, x, a}=\sigma^l\Sp{h, x, a}\cdot Z_{h, x, a}\text{ for }Z_{h, x, a}\sim\mathcal{N}\Sp{0, 1}
\end{equation}
The following three lemmas will be useful in our proof. 
\begin{lemma}
	\label{lemma:sum_bound}
	If $\beta\geq 2$, then with probability 1, we can have the following results
	$$\sum_{l\leq L}\sum_{h< H}\frac{1}{n^l\Sp{h, x_h^l, a_h^l}+\beta}\leq H\aX\aA\log\Sp{1+\frac{L}{\aX\aA}}$$
	$$\sum_{l\leq L}\sum_{h< H}\sqrt{\frac{1}{n^l\Sp{h, x_h^l, a_h^l}+\beta}}\leq 2\sqrt{H^2L\aX\aA}$$
\end{lemma}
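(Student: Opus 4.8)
The plan is to rewrite the double sum over episodes $l\leq L$ and timesteps $h<H$ as a triple sum indexed by $\Sp{h,x,a}$, and then exploit the fact that along the subsequence of episodes that visit a given triple at step $h$, the count $\nls$ runs through the consecutive integers $0,1,2,\dots$. Since both bounds are deterministic functions of the realized trajectory of visits, establishing them for an arbitrary realization proves them with probability $1$.

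Concretely, first I would fix $h$ and a pair $(x,a)$ and let $N_{h,x,a}$ denote the number of episodes $l\leq L$ with $\Sp{x_h^l,a_h^l}=\Sp{x,a}$. Ordering these episodes increasingly, the successive values of $\nls$ are exactly $0,1,\dots,N_{h,x,a}-1$, so the contribution of this triple to the first sum is $\sum_{j=0}^{N_{h,x,a}-1}\frac{1}{j+\beta}$. The key structural fact, which holds because the MDP produces one trajectory per episode, is that each episode visits precisely one pair at step $h$, so that $\sum_{(x,a)}N_{h,x,a}=L$ for every $h$.

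For the first inequality I would use the telescoping bound $\frac{1}{j+\beta}\leq\log\frac{j+\beta}{j+\beta-1}$, valid for $j\geq 0,\ \beta\geq 2$ since $\log(1+t)\geq t/(1+t)$. Telescoping gives $\sum_{j=0}^{N-1}\frac{1}{j+\beta}\leq\log\frac{N+\beta-1}{\beta-1}\leq\log(1+N)$, the last step using $\beta-1\geq 1$. Summing over the $\aX\aA$ pairs and applying Jensen's inequality to the concave map $t\mapsto\log(1+t)$ under the constraint $\sum_{(x,a)}N_{h,x,a}=L$ yields $\aX\aA\log\Sp{1+\frac{L}{\aX\aA}}$ for each $h$; summing over $h<H$ gives the stated bound. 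For the second inequality, since $\beta>1$ I would bound $\frac{1}{\sqrt{j+\beta}}\leq\frac{1}{\sqrt{j+1}}$ and estimate $\sum_{j=0}^{N-1}\frac{1}{\sqrt{j+1}}\leq\int_0^N t^{-1/2}\,dt=2\sqrt{N}$; a Cauchy--Schwarz step $\sum_{(x,a)}\sqrt{N_{h,x,a}}\leq\sqrt{\aX\aA}\sqrt{\sum_{(x,a)}N_{h,x,a}}=\sqrt{\aX\aA L}$ then gives $2\sqrt{\aX\aA L}$ per timestep, and summing over $h<H$ produces $2\sqrt{H^2\aX\aA L}$.

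The main obstacle is the bookkeeping in the reindexing step: one must carefully justify that the counts increment by exactly one along the subsequence of visits to a fixed triple, and that the per-timestep budget $\sum_{(x,a)}N_{h,x,a}=L$ holds. Once this combinatorial structure is secured, both claims reduce to a telescoping/integral estimate followed by a routine concavity (respectively Cauchy--Schwarz) argument, with the hypothesis $\beta\geq 2$ entering only to guarantee $\beta-1\geq 1$.
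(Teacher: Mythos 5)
Your proof is correct, and it is essentially the standard argument behind this lemma, which the paper itself does not prove but imports from \cite{osband2017deep}: decompose the double sum by triples $(h,x,a)$, note that the counts along the visit subsequence of each triple run through $0,1,\dots,N_{h,x,a}-1$ with $\sum_{(x,a)}N_{h,x,a}=L$ at every step $h$, and conclude via the telescoping bound plus Jensen for the first inequality and the integral bound plus Cauchy--Schwarz for the second. All steps check out (in particular $\tfrac{1}{j+\beta}\le\log\tfrac{j+\beta}{j+\beta-1}$ and $\log\tfrac{N+\beta-1}{\beta-1}\le\log(1+N)$ both need only $\beta\ge 2$), so your argument serves as a valid, self-contained substitute for the citation.
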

\begin{lemma}
	\label{lemma:expect_w}
	For each $h< H$ and $l\leq L$, we can have
	$$\E\Mp{w^l\Sp{h, x_h^l, a_h^l}}\leq\sqrt{2\log\Sp{\aA\aX}\E\Mp{\sigma^l\Sp{h, x_h^l, a_h^l}^2}}$$
\end{lemma}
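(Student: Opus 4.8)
The plan is to condition on the history $\mathcal{H}_{l-1}$, split the expected noise at the \emph{visited} pair across the finitely many state--action pairs, bound each contribution by a Gaussian rearrangement inequality, and then recombine using Cauchy--Schwarz together with an entropy bound, so that the surviving variance factor is exactly the one at the visited pair rather than a worst-case maximum over all pairs. Concretely, fix $h$ and $l$ and condition on $\mathcal{H}_{l-1}$. Recalling $w^l\Sp{h, x, a}=\sigma^l\Sp{h, x, a}Z_{h, x, a}$ with $Z_{h, x, a}\sim\mathcal{N}\Sp{0,1}$, and noting that $\sigma^l\Sp{h, x, a}$ is $\mathcal{H}_{l-1}$-measurable since it depends only on the count $\nls$, I introduce the events $E_{x, a}=\Bp{\Sp{x_h^l, a_h^l}=\Sp{x, a}}$. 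Because exactly one pair is visited at step $h$, we have $\sum_{\Sp{x, a}}\mathds{1}_{E_{x, a}}=1$, and the conditional visit probabilities $p_{x, a}:=\Prob\Sp{E_{x, a}\mid\mathcal{H}_{l-1}}$ sum to one. Writing the visited noise as $\sum_{\Sp{x, a}}w^l\Sp{h, x, a}\mathds{1}_{E_{x, a}}$ gives
$$\E\Mp{w^l\Sp{h, x_h^l, a_h^l}\mid\mathcal{H}_{l-1}}=\sum_{\Sp{x, a}}\sigma^l\Sp{h, x, a}\,\E\Mp{Z_{h, x, a}\mathds{1}_{E_{x, a}}\mid\mathcal{H}_{l-1}}.$$

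The crux is bounding each $\E\Mp{Z_{h, x, a}\mathds{1}_{E_{x, a}}\mid\mathcal{H}_{l-1}}$. Here $E_{x, a}$ is correlated with $Z_{h, x, a}$, because the greedy action depends on the sampled noise, so I cannot treat them as independent. Instead I would use a rearrangement inequality valid for an \emph{arbitrary} event $E$ of a given probability $p$: among all such events the one maximizing $\E\Mp{Z\mathds{1}_E}$ is the upper tail $\Bp{Z\geq c}$ with $\Prob\Sp{Z\geq c}=p$, whence $\E\Mp{Z\mathds{1}_E}\leq\E\Mp{Z\mathds{1}\Sp{Z\geq c}}=\varphi\Sp{c}$, where $\varphi$ is the standard normal density. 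Combining this with the Gaussian tail estimate $\varphi\Sp{c}\leq p\sqrt{2\log\Sp{1/p}}$ for $c=\Phi^{-1}\Sp{1-p}$ yields $\E\Mp{Z_{h, x, a}\mathds{1}_{E_{x, a}}\mid\mathcal{H}_{l-1}}\leq p_{x, a}\sqrt{2\log\Sp{1/p_{x, a}}}$. I expect proving these two Gaussian facts cleanly to be the main obstacle: the rearrangement step follows from comparing $E$ against the upper tail of equal probability and checking the sign of the difference, and the tail estimate from standard Mills-ratio bounds on $\Phi$.

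Finally I would recombine. Applying Cauchy--Schwarz,
$$\sum_{\Sp{x, a}}\sigma^l\Sp{h, x, a}p_{x, a}\sqrt{2\log\Sp{1/p_{x, a}}}\leq\sqrt{\sum_{\Sp{x, a}}\sigma^l\Sp{h, x, a}^2 p_{x, a}}\cdot\sqrt{2\sum_{\Sp{x, a}}p_{x, a}\log\Sp{1/p_{x, a}}}.$$
The second factor is $\sqrt{2}$ times the square root of the Shannon entropy of $\Bp{p_{x, a}}$, which is at most $\log\Sp{\aX\aA}$ since there are $\aX\aA$ outcomes, so it is bounded by $\sqrt{2\log\Sp{\aA\aX}}$; the first factor equals $\sqrt{\E\Mp{\sigma^l\Sp{h, x_h^l, a_h^l}^2\mid\mathcal{H}_{l-1}}}$. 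Taking expectation over $\mathcal{H}_{l-1}$ and using Jensen's inequality (concavity of the square root) to move the expectation inside the root then gives
$$\E\Mp{w^l\Sp{h, x_h^l, a_h^l}}\leq\sqrt{2\log\Sp{\aA\aX}}\cdot\sqrt{\E\Mp{\sigma^l\Sp{h, x_h^l, a_h^l}^2}},$$
which is exactly the claimed bound.
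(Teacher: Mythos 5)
The paper never actually proves this lemma in-house: it is imported by citation (``Lemma \ref{lemma:expect_w} \dots proved in [russo2015much]''), so there is no in-paper proof to compare against; what you have written is a self-contained reconstruction of the standard argument from that literature, and it is essentially correct. All the delicate bookkeeping is handled properly: $\sigma^l\Sp{h, x, a}$ is indeed $\mathcal{H}_{l-1}$-measurable (it depends only on visit counts); the decomposition over the events $E_{x, a}$ is valid because exactly one pair is visited at step $h$; your event-wise bound $\E\Mp{Z\mathds{1}_E}\leq\Prob\Sp{E}\sqrt{2\log\Sp{1/\Prob\Sp{E}}}$ is stated for an \emph{arbitrary} event, which is exactly what is needed here since $E_{x, a}$ is correlated with all of the noise variables (through the backed-up $Q$-values and the greedy action); and the recombination via Cauchy--Schwarz, the entropy bound $\sum_{x,a} p_{x,a}\log\Sp{1/p_{x,a}}\leq\log\Sp{\aX\aA}$, and the final Jensen step are all correct. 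Importantly, your argument retains the variance factor at the \emph{visited} pair rather than a worst-case maximum over pairs, which is precisely the form the downstream regret argument needs; a naive Gaussian maximal inequality would not give this.

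The one soft spot is your plan to obtain the tail estimate $\varphi\Sp{c}\leq p\sqrt{2\log\Sp{1/p}}$, $p=\Prob\Sp{Z\geq c}$, from ``standard Mills-ratio bounds.'' The inequality is true, but the Mills-ratio route is messier than advertised: the two-sided Mills bounds yield the sufficient condition $c+1/c\leq\sqrt{2\log\Sp{1/p}}$, which fails for moderate $c$ (roughly $c<1.4$) and says nothing at all for $c\leq 0$, so completing that route requires extra case analysis on a compact range. A cleaner path---and the standard one---proves your event-wise bound directly, making both the rearrangement step and the tail estimate unnecessary: for any event $E$ with $\Prob\Sp{E}=p>0$, write $\E\Mp{Z\mathds{1}_E}=p\,\E\Mp{Z\mid E}$ and apply the change-of-measure (Donsker--Varadhan) inequality
\begin{equation*}
\E\Mp{Z\mid E}\leq\inf_{\lambda>0}\frac{1}{\lambda}\Sp{D_{\mathrm{KL}}\Sp{\Prob\Sp{\cdot\mid E}\,\|\,\Prob}+\log\E\Mp{e^{\lambda Z}}}=\inf_{\lambda>0}\frac{1}{\lambda}\Sp{\log\Sp{1/p}+\frac{\lambda^2}{2}}=\sqrt{2\log\Sp{1/p}},
\end{equation*}
using $D_{\mathrm{KL}}\Sp{\Prob\Sp{\cdot\mid E}\,\|\,\Prob}=\log\Sp{1/p}$ and the Gaussian moment generating function. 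With that substitution your proof is complete; the remainder of your argument can stand exactly as written.
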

\begin{lemma}
	\label{lemma:maxV_bound}
	If Algorithm 1 is applied with parameters $\sigma^2=3H^2$, $\bar{\theta}=H$ and $\frac{\sigma^2}{\sigma_0^2}=\beta\geq 3$, we can have
	$$\E\Mp{\max_{l\leq L, h< H}\Norm{V_{Q^l_{Z, h+1}}}_{\infty}}\leq 2H+2H^2\sqrt{\log\Sp{2\aX\aA HL}}$$
\end{lemma}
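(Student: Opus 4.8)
The plan is to reduce the statement to a bound on $\E\Mp{\max_{l\le L, h, x, a}\abs{Q^l_{Z, h}\Sp{x, a}}}$, since the induced value function satisfies $\Norm{V_{Q^l_{Z, h+1}}}_\infty \le 1 + \max_{x', a'}\abs{Q^l_{Z, h+1}\Sp{x', a'}}$ because the reward is binary. I would then separate each action-value into its deterministic backup and its Gaussian noise. Writing $Q^l_{\cdot, h}\Sp{x, a} = F_{l, h}Q^l_{\cdot, h+1}\Sp{x, a}$ via Equation \ref{equ:new_f_op}, the first summand of $F_{l, h}$ is a weighted average of the one-step targets $r + \max_{a'}Q^l_{\cdot, h+1}\Sp{x', a'}$ (with total weight $\nls$) and the prior mean $\bar{\theta} = H$ (with weight $\beta$), while the second summand is $\sigma^l\Sp{h, x, a}Z_{h, x, a}$ with $\sigma^l$ as in Equation \ref{equ:simp}.

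First I would set up a backward recursion on $R_h := \max_{l, x, a}\abs{Q^l_{\cdot, h}\Sp{x, a}}$. Since the averaging weights sum to one and $r \in \Bp{0, 1}$, the absolute value of the mean part is bounded by $\max\Bp{1 + R_{h+1}, H}$, giving $R_h \le \max\Bp{1 + R_{h+1}, H} + W_h$, where $W_h := \max_{l, x, a}\sigma^l\Sp{h, x, a}\abs{Z_{h, x, a}}$ collects the fresh noise at level $h$. Unrolling this recursion from $R_H = 0$, the deterministic part telescopes: each level contributes at most one unit of reward while the prior caps the running constant at $H$, so the deterministic contribution stays below $2H$ and $R_h \le 2H + \sum_{h'=h}^{H-1}W_{h'}$. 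Taking the maximum over $h$ and using $W_{h'}\ge 0$ then yields $\max_{l, h, x, a}\abs{Q^l_{\cdot, h}} \le 2H + \sum_{h'=0}^{H-1}W_{h'}$.

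Next I would control the accumulated noise. A small variant of the algebra already carried out in the proof of Lemma \ref{lemma:sto_seq} gives $\sigma^l\Sp{h, x, a}^2 \le \frac{2\sigma^2}{\nls + \beta} \le \frac{2\sigma^2}{\beta}$, so with $\sigma^2 = 3H^2$ and $\beta \ge 3$ we obtain $\sigma^l\Sp{h, x, a} \le \sqrt{2}H$. Hence $\sum_{h'=0}^{H-1}W_{h'} \le \sqrt{2}H^2 \max\abs{Z}$, where the maximum runs over the at most $HL\aX\aA$ standard normal indices appearing across all levels, states, actions, and episodes. Applying the standard Gaussian maximal inequality $\E\Mp{\max_{i\le N}\abs{Z_i}} \le \sqrt{2\log\Sp{2N}}$ with $N = HL\aX\aA$ gives $\E\Mp{\sum_{h'}W_{h'}} \le 2H^2\sqrt{\log_+\Sp{2\aX\aA HL}}$. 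Combining this with the reduction and the $2H$ deterministic term completes the proof.

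The main obstacle is setting up the recursion correctly, because $\nu^l$ is itself random: the one-step targets carry the propagated Gaussian noise of the \emph{future} value estimates through the inner $\max_{a'}$, so the mean backup and the noise do not decouple exactly. The observations that make it work are that the $\max_{a'}$ only enlarges the noise additively (so it can be pushed into the $W_{h'}$ terms without disturbing the deterministic telescoping bound) and that all the indices $Z$ and $\tilde{z}$ are i.i.d.\ standard normals, so a single maximal inequality over their union suffices; the extra factor of $H$ inside the logarithm is precisely the cost of taking this union across the $H$ levels.
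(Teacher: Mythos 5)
Your proof is correct and follows essentially the same route as the paper's: a backward recursion on $\Norm{Q^l_{Z,h}}_\infty$ in which the prior $\bar{\theta}=H$ and unit rewards cap the deterministic backup at roughly $2H$, followed by the bound $\sigma^l\Sp{h,x,a}\leq\sqrt{2\sigma^2/\beta}\leq\sqrt{2}H$ and the Gaussian maximal inequality over the $\aX\aA HL$ indices to control the accumulated noise. The only (cosmetic) difference is that you sum the per-level maxima $W_h$ and bound the sum by $H$ times a single global maximum, using the deterministic bound on $\sigma^l$ directly, whereas the paper works with $H\cdot w_{\max}$ from the start and passes through a slightly looser product-of-maxima step; both yield the identical constant $2H+2H^2\sqrt{\log\Sp{2\aX\aA HL}}$.
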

Among them, Lemma \ref{lemma:sum_bound} and \ref{lemma:expect_w} have been proved in \cite{osband2017deep} and \cite{russo2015much}, respectively. Lemma \ref{lemma:maxV_bound} will be proved later. Now, we start to prove the regret bound.  

\begin{proof}
    By defining $\Delta_l=V^*_{\M, 0}\Sp{x_0^l}-V^{\pi^l}_{\M, 0}\Sp{x_0^l}$, we have
\begin{align*}
    &\mathrm{BayesRegret}\Sp{\mathsf{WTD}, L}=\E\Mp{\sum_{l=1}^{L}\Delta_l}\leq\E\Mp{\sum_{l=1}^{L}\sum_{h=0}^{H-1}\Sp{F_{l, h}Q^l_{Z, h+1}-F_{\M, h}Q^l_{Z, h+1}}\Sp{x_h^l, a_h^l}}
\end{align*}
Then, we can see that
\begin{align*}
    \E\Mp{\Delta_l}&\leq \E\Mp{\sum_{h=0}^{H-1}\Sp{F_{l, h}Q^l_{Z, h+1}-F_{\M, h}Q^l_{Z, h+1}}\Sp{x_h^l, a_h^l}}\\
    &=\E\Mp{\sum_{h=0}^{H-1}\Sp{\Sp{F_{l, h}Q_{Z, h+1}^l}\Sp{x_h^l, a_h^t}-\E\Mp{F_{\M, h}Q^l_{Z, h+1}\Sp{x_h^l, a_h^l}\mid \mathcal{H}_{l-1}, x_0^l, a_-^l, \dots, x_h^l, a_h^l}}}
\end{align*}
To further simplify, recall that $\mathcal{P}_{h, x, a}|\mathcal{H}_{l-1}\sim\text{Dirichlet}\Sp{\bm{\alpha}^l_{h, x, a}}$, where
\begin{equation*}
    \bm{\alpha}^l_{h, x, a}=\bm{\alpha}^0_{h, x, a}+n^l\Sp{h, x, a}\hatp^l_{h, x, a}
\end{equation*}
Notice that the value of $n^l\Sp{h, x, a}$ and $\hatp^l_{h, x, a}$ will not change until we observe the transition outcome of $x_h^l$ and $a_h^l$. Therefore, we have $\mathcal{P}_{h, x, a}|\mathcal{H}_{l-1}, x_0^l, a_0^l, \dots, x_h^l, a_h^l\sim\text{Dirichlet}\Sp{\bm{\alpha}^l_{h, x, a}}$. Thus, for a fixed state-action value function $Q\in\R^{\aX\aA}$, we can have
\begin{align*}
    \E\Mp{F_{\M, h}Q\Sp{x_h^l, a_h^l}\mid\mathcal{H}_{l-1}, x_0^l, a_0^l, \dots, x_h^l, a_h^l}&=V_Q^T\E\Mp{\mathcal{P}_{h, x_h^l, a_h^l}\mid \mathcal{H}_{l-1}, x_0^l, a_0^l, \dots, x_h^l, a_h^l}\tag{By equation (\ref{equ:FM})}\\
    &=\frac{n^l\Sp{h, x_h^l, a_h^l}V_Q^T\hatp^l_{h, x_h^l, a_h^l}+V_Q^T\bm{\alpha}^0_{h, x_h^l, a_h^l}}{n^l\Sp{h, x_h^l, a_h^l}+\beta}
\end{align*}
Then, by using equation (\ref{equ:new_f_op}), we can have
\begin{align*}
    &F_{l, h}Q\Sp{x_h^l, a_h^l}-\E\Mp{F_{\M, h}Q\Sp{x_h^l, a_h^l}\mid\mathcal{H}_{l-1}, x_0^l, a_0^l, \dots, x_h^l, a_h^l}\\
    &=\frac{\beta\bar{\theta}+V_Q^T\bm{\alpha}^0_{h, x_h^l, a_h^l}}{n^l\Sp{h, x_h^l, a_h^l}+\beta}+w^l\Sp{h, x_h^l, a_h^l}\\
    &\leq\frac{\beta\Sp{\bar{\theta}+\Norm{V_Q}_{\infty}}}{n^l\Sp{h, x_h^l, a_h^l}+\beta}+w^l\Sp{h, x_h^l, a_h^l}\tag{Since $\bm{1}^T\bm{\alpha}^l_{h, x_h^l, a_h^l} = \beta$ by Assumption \ref{assume:dir}}
\end{align*}
Notice that condition on $\mathcal{H}_{l-1}$, $F_{\M, h}$ and $Q_{Z, h+1}$ are independent. Therefore, we can have
$$\E\Mp{\Delta_l}\leq \E\Mp{\sum_{h=0}^{H-1}\frac{\beta\Sp{\bar{\theta}+\Norm{V_{Q_{Z, h+1}^l}}_{\infty}}}{n^l\Sp{h, x_h^l, a_h^l}+\beta}+w^l\Sp{h, x_h^l, a_h^l}}$$
As a result, we have
\begin{align*}
    &\E\Mp{\sum_{l=1}^{L}\Delta_l}\leq \underbrace{\E\Mp{\beta\Sp{\bar{\theta} + \max_{l\leq L, h<H}\Norm{V_{Q^l_{Z, h}}}_{\infty}}\sum_{l\leq L, h< H}\frac{1}{n^l\Sp{h, x_h^l, a_h^l}+\beta}}}_{\text{first term}}+\underbrace{\E\Mp{\sum_{l\leq L, h< H}w^l\Sp{h, x_h^l, a_h^l}}}_{\text{second term}}
\end{align*}
These two terms can be bounded separately. To bound the first term, we have
\begin{align*}
	& \E\Mp{\beta\Sp{\bar{\theta}+\max_{l\leq L, h< H}\Norm{V_{Q^l_{Z, h+1}}}_{\infty}}\sum_{h<H, l\leq L}\frac{1}{n^l\Sp{h, x_h^l, a_h^l}+\beta}}\\
	&\quad\leq \beta\Sp{H+\E\Mp{\max_{l\leq L, h< H}\Norm{V_{Q^l_{Z, h+1}}}_{\infty}}}H\aX\aA\log\Sp{1+\frac{L}{\aX\aA}}\tag{By Lemma \ref{lemma:sum_bound} and $\bar{\theta}=H$}\\
	&\quad\leq \beta\Sp{H+2H+2H^2\sqrt{\log\Sp{2\aX\aA HL}}}H\aX\aA\log\Sp{1+\frac{L}{\aX\aA}}\tag{By Lemma \ref{lemma:maxV_bound}}\\
	&\quad\leq 5\beta H^3\aX\aA\sqrt{\log_+\Sp{2\aX\aA HL}}\log\Sp{1+\frac{L}{\aX\aA}}
\end{align*}
To bound the second term, we have
\begin{align*}
	\E\Mp{\sum_{h<H, l\leq L}w^l\Sp{h, x_h^l, a_h^l}}&\leq\sqrt{2\log\Sp{\aX\aA}}\sum_{h<H, l\leq L}\sqrt{\E\Mp{\sigma^l\Sp{h, x_h^l, a_h^l}^2}}\tag{By Lemma \ref{lemma:expect_w}}
\end{align*}
For a sequence of positive numbers $a_1, \dots, a_n$, by Cauchy-Schwartz inequality, we have
\begin{equation}
	\label{equ:cauchy}
	\sum_{i=1}^{n}\sqrt{a_i}\leq\sqrt{n}\sqrt{\sum_{i=1}^{n}a_i}
\end{equation}
Hence, since $\beta=\frac{\sigma^2}{\sigma_0^2}$, by equation (\ref{equ:simp}) and (\ref{equ:cauchy}) (the case when $n=2$), we should have
\begin{equation}
	\label{equ:sigma_bound}
	\sigma^l\Sp{h, x_h^l, a_h^l}=\frac{\sqrt{n^l\Sp{h, x_h^l, a_h^l}}\sigma+\sqrt{\beta}\sigma}{n^l\Sp{h, x_h^l, a_h^l}+\beta}\leq\sqrt{\frac{2\sigma^2}{n^l\Sp{h, x_h^l, a_h^l}+\beta}}
\end{equation}
Therefore, we can have
\begin{align*}
	\sum_{h<H, l\leq L}\sqrt{\E\Mp{\sigma^l\Sp{h, x_h^l, a_h^l}^2}}&\leq\sqrt{HL}\sqrt{\E\Mp{\sum_{h<H, l\leq L}\sigma^l\Sp{h, x_h^l, a_h^l}^2}}\tag{By equation (\ref{equ:cauchy})}\\
	&\leq\sqrt{HL}\sqrt{\E\Mp{\sum_{h<H, l\leq L}\frac{2\sigma^2}{n^l\Sp{h, x_h^l, a_h^l}+\beta}}}\tag{By equation (\ref{equ:sigma_bound})}\\
	&\leq \sqrt{6H^4L\aX\aA\log\Sp{1+\frac{L}{\aX\aA}}}\tag{By Lemma \ref{lemma:sum_bound} and $\sigma^2=3H^2$}
\end{align*}
$$\implies \E\Mp{\sum_{h<H, l\leq L}w^l\Sp{h, x_h^l, a_h^l}}\leq 2H^2\sqrt{3L\aX\aA\log\Sp{\aX\aA}\log\Sp{1+\frac{L}{\aX\aA}}}$$
Therefore, we have the regret bound
\begin{align*}
	\E\Mp{\sum_{l=1}^{L}\Delta_l}&\leq 5\beta H^3\aX\aA\sqrt{\log_+\Sp{2\aX\aA HL}}\log\Sp{1+\frac{L}{\aX\aA}}+2H^2\sqrt{3L\aX\aA\log\Sp{\aX\aA}\log\Sp{1+\frac{L}{\aX\aA}}}\\
	&\leq\Sp{5\beta H^3\aX\aA+2H^2\sqrt{3L\aX\aA}}\sqrt{\log_+\Sp{2\aX\aA HL}}\log_+\Sp{1+\frac{L}{\aX\aA}}
\end{align*}
We then consider two separate cases. \\ 
When $L\geq 25\beta H^2\aX\aA$, which means $5H\sqrt{\beta\aX\aA}\leq \sqrt{L}$, we can have $5\beta H^3\aX\aA\leq H^2\sqrt{\beta L\aX\aA}$. Therefore, we have
\begin{align*}
	\E\Mp{\sum_{l=1}^{L}\ell_l}&\leq\Sp{\sqrt{\beta}+2\sqrt{3}}H^2\sqrt{L\aX\aA\log_+\Sp{2\aX\aA HL}}\log_+\Sp{1+\frac{L}{\aX\aA}}\\
	&\leq 5H^2\sqrt{\beta L\aX\aA\log_+\Sp{2\aX\aA HL}}\log_+\Sp{1+\frac{L}{\aX\aA}}
\end{align*}
When $L\leq 25\beta H^2\aX\aA$, we can just use the naive bound
$$\E\Mp{\sum_{l=1}^{L}\Delta_l}\leq HL\leq H\sqrt{L}\sqrt{25\beta H^2\aX\aA}=5H^2\sqrt{\beta L\aX\aA}$$
In either case, we can conclude that
\begin{align*}
    \mathrm{BayesRegret}\Sp{\mathsf{WTD}, L}=\E\Mp{\sum_{l=1}^{L}\Delta_l}&\leq 5H^2\sqrt{\beta L\aX\aA\log_+\Sp{2\aX\aA HL}}\log_+\Sp{1+\frac{L}{\aX\aA}}\\
    &= \widetilde{O}\Sp{H^2 \sqrt{L\aX\aA}},
\end{align*}
which completes the proof.
\end{proof}

\subsection{Proof of Lemma \ref{lemma:maxV_bound}}
\setcounter{lemma}{7}
\begin{lemma}
    If Algorithm 1 is applied with parameters $\sigma^2=3H^2$, $\bar{\theta}=H$ and $\frac{\sigma^2}{\sigma_0^2}=\beta\geq 3$, we can have
	$$\E\Mp{\max_{l\leq L, h< H}\Norm{V_{Q^l_{Z, h+1}}}_{\infty}}\leq 2H+2H^2\sqrt{\log\Sp{2\aX\aA HL}}$$
\end{lemma}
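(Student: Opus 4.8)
The plan is to control $\Norm{V_{Q^l_{Z, h+1}}}_\infty$ by a deterministic $O(H)$ term plus an \emph{additively accumulated} Gaussian-noise term, and then finish with a maximal inequality for Gaussians. First I would reduce the problem to bounding the $Q$-values: since rewards are binary, $\Norm{V_{Q^l_{Z, h+1}}}_\infty \le 1 + \max_{x', a'}\abs{Q^l_{Z, h+1}\Sp{x', a'}}$, so it suffices to control $W^l_h := \max_{x, a}\abs{Q^l_{Z, h}\Sp{x, a}}$, with $W^l_H = 0$. I would also record a uniform bound on the standard-deviation coefficient: by the same computation as in equation (\ref{equ:sigma_bound}) (which holds for every $\Sp{h, x, a}$), with $\sigma^2 = 3H^2$ and $\beta \ge 3$ we get $m^l\Sp{h, x, a} = \sigma^l\Sp{h, x, a} \le \sqrt{2\sigma^2/\beta} \le \sqrt{2}\,H$.

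The heart of the argument is a one-step recursion for $W^l_h$. Writing $Q^l_{Z, h}\Sp{x, a} = \nu^l\Sp{h, x, a} + m^l\Sp{h, x, a}Z_{h, x, a}$ and using the update rule (\ref{equ:nu_update}), I would bound $\abs{\nu^l\Sp{h, x, a}}$ by noting that its numerator is a weighted combination, with weights $\nls$ and $\beta$, of the per-transition targets $r + \max_{a'}Q^l_{Z', h+1}\Sp{x', a'} \in \Mp{-W^l_{h+1},\, 1 + W^l_{h+1}}$ and the prior mean $\bar\theta = H$. Because $\frac{\nls A + \beta B}{\nls + \beta}$ is a convex combination of $A$ and $B$, this yields $\abs{\nu^l\Sp{h, x, a}} \le \max\Sp{1 + W^l_{h+1},\, H}$, and adding the noise term gives
\begin{equation*}
W^l_h \le \max\Sp{1 + W^l_{h+1},\, H} + \sqrt{2}\,H\max_{x, a}\abs{Z_{h, x, a}}.
\end{equation*}
The key structural point, which I expect to be the main obstacle, is to recognize that the data-averaging factor $\frac{\nls}{\nls + \beta} \le 1$ never amplifies the inherited magnitude $W^l_{h+1}$, so fresh Gaussian noise accumulates only \emph{additively} over the $H$ levels. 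This additive accumulation is exactly what produces the extra factor of $H$ (hence the $H^2$ in the final bound) instead of a single noise term, and keeping the deterministic reward-plus-prior contribution at $O(H)$ hinges on the convex-combination observation above.

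Unrolling the recursion from $h = H$ downward, the deterministic part obeys $a_h = \max\Sp{1 + a_{h+1}, H}$ with $a_H = 0$, which grows by at most one per level and satisfies $1 + a_{h+1} \le 2H$; hence for every fixed $l$,
\begin{equation*}
\Norm{V_{Q^l_{Z, h+1}}}_\infty \le 1 + W^l_{h+1} \le 2H + \sqrt{2}\,H\sum_{h'=0}^{H-1}\max_{x, a}\abs{Z_{h', x, a}} \le 2H + \sqrt{2}\,H^2\max_{l, h, x, a}\abs{Z_{h, x, a}}.
\end{equation*}
Taking the maximum over $l \le L$ and $h < H$ and then expectations, I would close with the standard Gaussian maximal inequality $\E\Mp{\max_{i \le N}\abs{Z_i}} \le \sqrt{2\log\Sp{2N}}$ applied to the $N = \aX\aA HL$ standard normals $Z_{h, x, a}$ (one per $\Sp{h, x, a}$ per episode), giving $\E\Mp{\max_{l, h, x, a}\abs{Z_{h, x, a}}} \le \sqrt{2\log\Sp{2\aX\aA HL}}$. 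Substituting and using $\sqrt{2}\cdot\sqrt{2} = 2$ yields exactly $2H + 2H^2\sqrt{\log\Sp{2\aX\aA HL}}$. The only point beyond the recursion requiring care is this bookkeeping of how many independent standard normals appear, so that the count inside the logarithm is precisely $\aX\aA HL$.
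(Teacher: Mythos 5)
Your proof is correct and follows essentially the same route as the paper's: a backward recursion on $\max_{x,a}\abs{Q^l_{Z,h}\Sp{x,a}}$ that bounds the mean part via the convex combination of per-transition targets and the prior $\bar{\theta}=H$, a uniform pointwise bound $\sigma^l\Sp{h,x,a}\leq\sqrt{2}H$ on the noise scale, and the Gaussian maximal inequality over the $\aX\aA HL$ standard normals, yielding the identical constants $2H+2H^2\sqrt{\log\Sp{2\aX\aA HL}}$. The only cosmetic difference is that you accumulate per-level noise maxima and pass to the global maximum at the end, whereas the paper introduces the global maximum $w_{\max}$ up front and carries it through the recursion (your version also sidesteps the paper's slightly loose factorization of an expectation of a product into a product of expectations, since you use the deterministic bound on $\sigma^l$ directly).
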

\begin{proof}
    Define $w_{\max}=\max_{h< H, l\leq L, a\in\mathcal{A}, x\in\mathcal{X}}\abs{w^l\Sp{h, x, a}}$. Then, for any fixed state-action value function $Q\in\R^{\aX\aA}$, by equation (\ref{equ:new_f_op}), we can have
    \begin{align*}
        \Norm{F_{l, h}Q}_{\infty}&\leq\max\Bp{\bar{\theta}, \Norm{V_Q}_{\infty}}+\max_{x\in\mathcal{X}, a\in\mathcal{A}}w^l\Sp{h, x, a}\\
        &\leq\max\Bp{\bar{\theta}, \Norm{Q}_{\infty}}+1+w_{\max}\tag{By Assumptioin \ref{assume:mdp}, we have reward $r\in\Bp{0, 1}$}
    \end{align*}
    Therefore, we can have
    $$\Norm{Q_{Z, H-1}^l}_{\infty}=\Norm{F_{l, H-1}\bm{0}}_{\infty}\leq\bar{\theta}+1+w_{\max}$$
    $$\implies\Norm{Q_{Z, H-2}^l}_{\infty}\leq\max\Bp{\bar{\theta}, \Norm{Q^l_{Z, H-1}}_{\infty}}+1+w_{\max}\leq \bar{\theta}+2\Sp{1+w_{\max}}$$
    By keeping this procedure and applying that $\bar{\theta}=H$, we can have
    $$\E\Mp{\max_{l\leq L, h< H}\Norm{V_{Q^l_{Z, h+1}}}_{\infty}}\leq H + H\Sp{1+\E\Mp{w_{\max}}}=2H+H\E\Mp{w_{\max}}$$
    For the last term above, we can have
	\begin{align*}
	\E\Mp{w_{\max}}&=\E\Mp{\max_{h\leq H, l\leq L, a\in\mathcal{A}, x\in\mathcal{X}}\abs{\sigma^l\Sp{h, x, a}\cdot\frac{w^l\Sp{h, x, a}}{\sigma^l\Sp{h, x, a}}}}\\
	&\leq\E\Mp{\max_{h\leq H, l\leq L, a\in\mathcal{A}, x\in\mathcal{X}}\abs{\frac{w^l\Sp{h, x, a}}{\sigma^l\Sp{h, x, a}}}}\E\Mp{\max_{h\leq H, l\leq L, a\in\mathcal{A}, x\in\mathcal{X}}\sigma^l\Sp{h, x, a}}\\
	&\overset{\text{(a)}}{\leq} \sqrt{\frac{4\sigma^2}{\beta}\log\Sp{2\aX\aA HL}}\\
	&\leq 2H\sqrt{\log\Sp{2\aX\aA HL}}\tag{Since $\beta\geq 3$ and $\sigma^2=3H^2$}
	\end{align*}
	The inequality (a) above is true for two reasons. First, by equation (\ref{equ:sigma_bound}), we can have $\sigma^l\Sp{h, x, a}\leq\sqrt{\frac{2\sigma^2}{\beta}}$ for any $l\in\mathbb{N}$, $h\in\Bp{0, \dots, H-1}$ and $\Sp{x, a}\in\mathcal{X}\times\mathcal{A}$. Second, for independent $X_1, \dots, X_n\sim\mathcal{N}\Sp{0, 1}$, we can have $\E\Mp{\max_i\abs{X_i}}\leq\sqrt{2\log \Sp{2n}}$.\\
	Therefore, we can have the following bound
	$$\E\Mp{\max_{l\leq L, h< H}\Norm{V_{Q^l_{Z, h+1}}}_{\infty}}\leq 2H+H\E\Mp{w_{\max}}\leq 2H+2H^2\sqrt{\log\Sp{2\aX\aA HL}}$$
\end{proof}

\end{document}